\newcolumntype{Y}{>{\centering\arraybackslash}X}
\def\mathcolor#1#{\@mathcolor{#1}}
\def\@mathcolor#1#2#3{%
  \protect\leavevmode
  \begingroup
    \color#1{#2}#3%
  \endgroup
}
\let\oldtexttt\texttt
\renewcommand{\texttt}[1]{\oldtexttt{\small#1}}
\newtheorem{theorem}{Theorem}[section]
\newtheorem{proposition}[theorem]{Proposition}
\newtheorem{lemma}[theorem]{Lemma}
\newtheorem{corollary}[theorem]{Corollary}
\newtheorem{definition}[theorem]{Definition}
\newtheorem{assumption}[theorem]{Assumption}
\newtheorem{remark}[theorem]{Remark}
\title{Ergodic Risk Measures:\\Towards a Risk-Aware Foundation for\\Continual Reinforcement Learning}
\runningtitle{Ergodic Risk Measures: Towards a Risk-Aware Foundation for Continual Reinforcement Learning}
\author[1]{Juan Sebastian Rojas}
\author[1]{Chi-Guhn Lee}
\affil[1]{University of Toronto, Canada} 
\begin{abstract}
Continual reinforcement learning (continual RL) seeks to formalize the notions of lifelong learning and endless adaptation in RL. In particular, the aim of continual RL is to develop RL agents that can maintain a careful balance between retaining useful information and adapting to new situations. To date, continual RL has been explored almost exclusively through the lens of risk-neutral decision-making, in which the agent aims to optimize the expected long-run performance. In this work, we present the first formal theoretical treatment of continual RL through the lens of \emph{risk-aware} decision-making, in which the behaviour of the agent is directed towards optimizing a measure of long-run performance beyond the mean. In particular, we show that the classical theory of risk measures, widely used as a theoretical foundation in non-continual risk-aware RL, is, in its current form, incompatible with continual learning. Then, building on this insight, we extend risk measure theory into the continual setting by introducing a new class of \emph{ergodic risk measures}, and showing that it is compatible with continual learning. Finally, we provide a case study of continual risk-aware learning, along with empirical results, which show the intuitive appeal of ergodic risk measures in continual settings.
\end{abstract}
\begin{document}

\maketitle

\section{Introduction}
Reinforcement learning (RL) \citep{Sutton2018-eh} has enjoyed success over the years when tackling certain problems of interest in various domains, ranging from video games to robotics. However, the RL agents behind these successes are typically trained in static environments, and are evaluated on a single task for a finite period of time. By contrast, RL agents deployed in the real world may be required to operate indefinitely in scenarios where the environment and/or task changes over time. This discrepancy has motivated the study of continual reinforcement learning (continual RL) \citep{Ring1994-ww, Khetarpal2022-br, Abel2023-xe, Kumar2025-cj}, which formalizes the challenges of lifelong learning and endless adaptation in RL. At the heart of continual RL is the \emph{stability-plasticity dilemma}, through which an agent must learn to preserve sufficient prior knowledge, while still remaining sufficiently flexible to adapt to new streams of experience.

To date, the notion of continual RL has been studied almost exclusively under a \emph{risk-neutral} lens, such that the behaviour of the agent is directed towards optimizing some expectation-based measure of long-run performance (e.g. the expected sum of discounted rewards or the average per-step reward). In this work, we present the first formal theoretical exploration of continual RL through the lens of \emph{risk-aware} decision-making, in which the behaviour of the agent is directed towards optimizing a measure of long-run performance beyond the mean. Through this exploration, we seek to provide a theoretically grounded formalism through which we can frame the notion of risk-awareness in a continual setting. Namely, the first half of our exploration is guided by the following questions:
\vspace{-4pt}
\begin{itemize}[label=\(\triangleright\)]
    \item \emph{What does it mean for an agent to be risk-aware in a continual setting?}\\ 
    \item \emph{What are the conditions needed to enable an agent to be risk-aware in a continual setting?}\\ 
    \item \emph{What are the implications of risk-awareness as it relates to the stability-plasticity dilemma?} 
\end{itemize}

To answer these questions, we first propose a risk-aware generalization of the continual RL framework proposed in \citet{Abel2025-ez}, where we formalize continual risk-aware RL as an ongoing exchange of actions and observations between an agent and an environment, such that the agent receives observations from the environment, processes those observations into \emph{risk-aware observations} based on its \emph{risk attitude} (or \emph{risk tolerance}), then emits actions back to the environment based on the risk-aware observations. Then, we leverage this risk-aware framework to propose a set of axioms that establish the conditions needed to enable an agent to be risk-aware in continual settings. 

\begin{figure}[htbp]
\centerline{\includegraphics[scale=0.505]{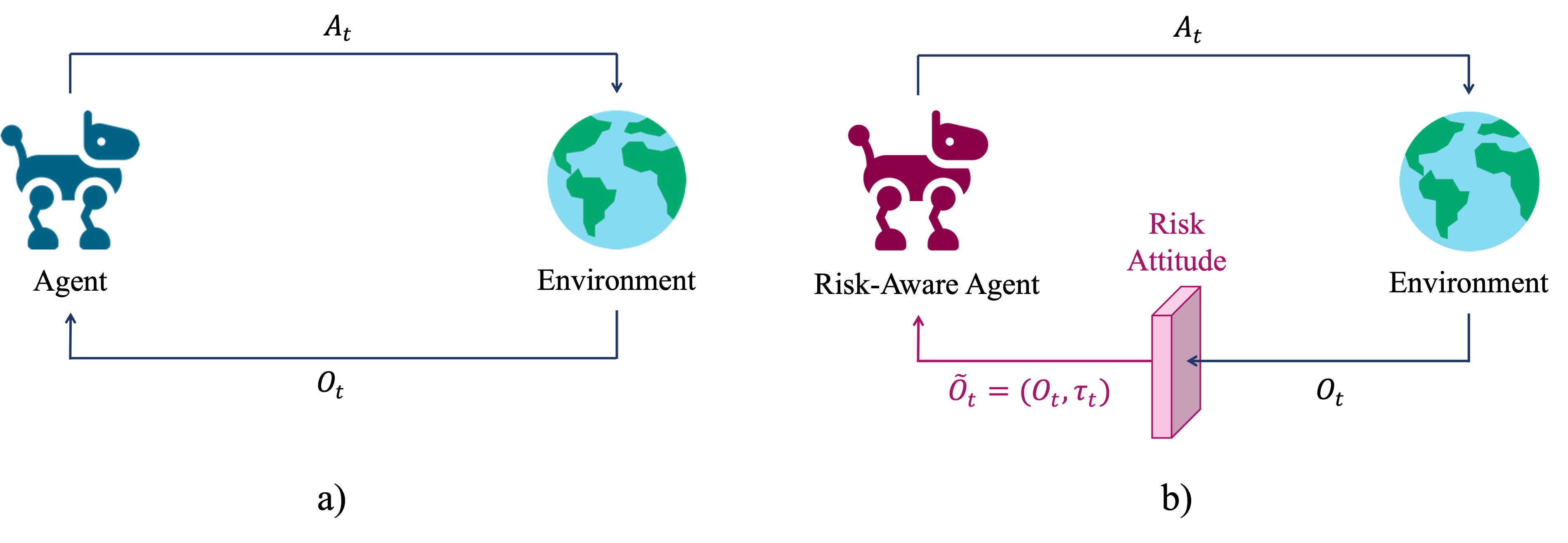}}
\caption{A comparison of: \textbf{a)} the continual RL framework proposed in \citet{Abel2025-ez}, and \textbf{b)} the risk-aware generalization of the framework proposed in this work.}
\label{fig_overview}
\end{figure}

The second half of our exploration is then aimed at examining existing frameworks and methodologies used for non-continual risk-aware RL to see how compatible they are with these axioms. To this end, we examine the classical theory of \emph{risk measures} (e.g. see Chapter 6 of \citet{Shapiro2009-pp}), which has served as a theoretical foundation for risk-aware decision-making in non-continual RL, and find that, in its current form, it is incompatible with the axioms, and hence, continual learning. Then, building on this insight, we extend risk measure theory into the continual setting by proposing a new class of \emph{ergodic risk measures}, and showing that it is compatible with continual learning. 

Finally, using the well-known average-reward Markov decision process (MDP) formulation \citep{Puterman1994-dq} as a basis, we provide a case study, along with numerical results, which show the intuitive appeal of ergodic risk measures in continual settings. Altogether, these contributions provide, to the best of our knowledge, the first formal theoretical treatment of risk-aware decision-making in a continual (i.e., lifelong) learning setting.

\section{Related Work}

\subsection{Continual Reinforcement Learning}
\label{related_continual}
The notions of lifelong learning and endless adaptation in the context of RL have long been studied under different names and perspectives. In recent years, several works (e.g. \citet{Khetarpal2022-br, Abel2023-xe, Kumar2025-cj}) have attempted to unify and frame these diverse sets of works as instances of continual RL. Some of the more common types of RL-related works that can be interpreted as being instances of continual RL include the study of the loss of plasticity in deep RL agents (e.g. \citet{Abbas2023-iw, Dohare2024-vb}), transfer learning (e.g. \citet{Abel2018-nr, Gimelfarb2021-hg}), and decision-making in non-stationary environments (e.g. \citet{Dick2014-fy, Luketina2022-gs}). In this work, we place a great emphasis on the notion of \emph{plasticity}, which has been studied extensively in prior works such as \citet{Raghavan2021-yp}, \citet{Chen2023-ph}, and \citet{Abel2025-ez}. In our proposed framework, we build on the definition of plasticity (and subsequent continual RL framework) proposed in \citet{Abel2025-ez}. 

\subsection{Risk-Aware Reinforcement Learning}
\label{related_risk}
The notion of risk-aware learning and decision-making in the context of RL has been studied under various theoretical frameworks, from the well-established expected utility framework \citep{Howard1972-zv}, to the more contemporary framework of risk measures (e.g. Chapter 6 of \citet{Shapiro2009-pp}). In this work, we focus on the latter framework, which originated in the finance literature (e.g. \citet{Rockafellar2000-xu}), but has since been widely integrated into RL-based works (e.g. \citet{Bauerle2011-yi}). Traditionally, non-continual risk-aware RL works have aimed to optimize either a \emph{static} risk measure (e.g. \citet{Mead2025-th}), or a \emph{nested (dynamic)} risk measure (e.g. \citet{Wang2025-om}). The case study presented in this work primarily focuses on the conditional value-at-risk (CVaR) risk measure \citep{Rockafellar2000-xu}, which has been studied extensively in the discounted setting (e.g. \citet{Bauerle2011-yi, Mead2025-th}), and, to a lesser extent, in the average-reward setting (e.g. \citet{Xia2023-cq, Rojas2025-bf}).

\section{Preliminaries}
\label{prelim}

\subsection{Continual Reinforcement Learning}
\label{prelim_crl}
Continual reinforcement learning (continual RL) can be viewed as an agent-environment interaction occurring over an infinite time horizon. In particular, following the framework proposed in \citet{Abel2025-ez}, this interaction can be modeled as an exchange of signals between an agent and an environment, where, at each discrete time step, \(t\), the agent emits an action from an action-space, \(A_t \in \mathcal{A}\), and the environment emits an observation from an observation-space, \(O_t \in \mathcal{O}\) (see Figure \ref{fig_overview}\textcolor{mylightblue}{a)}). Accordingly, for a given agent and environment, we can define their interaction as a pair of sequences of random variables, \(O_{a:b} \doteq (O_a, O_{a+1}, \ldots, O_{b-1}, O_b)\) and \(A_{c:d} \doteq (A_c, A_{c+1}, \ldots,A_{d-1}, A_d)\), where \(a, b, c,\) and \(d\) denote discrete time indices. 

One of the defining aspects of continual RL is the stability-plasticity dilemma, through which an agent must carefully balance the degree to which newly acquired information affects its behaviour relative to previously learned knowledge. In particular, the plasticity, \(\mathfrak{P}\), of a given agent relative to a given environment with respect to the time intervals \([a:b] \doteq [a, a+1, \ldots, b-1, b]\), \(1 \leq a \leq b\), and \([c:d] \doteq [c, c+1, \ldots, d-1, d]\), \(1 \leq c \leq d\), can be defined as follows \citep{Abel2025-ez}:
\vspace{3pt}
\begin{equation}
\label{eq_plasticity}
    \mathfrak{P}_{\substack{a:b \\ c:d}} \doteq \mathbb{I}(O_{a:b} \to A_{c:d}),
\end{equation}
where \(\mathbb{I}(X_{a:b} \to Y_{c:d}) \doteq \sum_{i=\max(a,c)}^{d}\mathbb{I}(X_{a:\min(b,i)}; Y_i \mid X_{1:a-1}, Y_{1:i-1})\) denotes the \emph{generalized directed information} \citep{Abel2025-ez} between two sequences of random variables, \(X_{a:b}\) and \(Y_{c:d}\), and \(\mathbb{I}(X;Y)\) denotes the mutual information between random variables \(X\) and \(Y\). In words, the above definition interprets plasticity as a measure of how much a sequence of observations from the environment influences a sequence of the agent's actions. Accordingly, the stability-plasticity dilemma can be viewed as the dilemma associated with determining the right amount of plasticity, \(\mathfrak{P}\).

\subsection{Risk Measures}
\label{prelim_rm}
Let \((\Omega, \mathcal{F}, \mathbb{P})\) denote a probability space, and let \(\mathcal{X}\) denote a space of random variables of the form \(X: \Omega \to \mathbb{R}\). A \emph{risk measure} (e.g. Chapter 6 of \citet{Shapiro2009-pp}) is a functional, 
\(\rho: \mathcal{X} \times \mathcal{T} \to \mathbb{R}\), that assigns to each random variable, \(X \in \mathcal{X}\), a real value representing the degree of risk associated with \(X\) with respect to some \emph{risk attitude} or \emph{risk tolerance}, \(\tau \in \mathcal{T}\), where \(\mathcal{T}\) denotes a set that captures the agent's risk attitude. In other words, a risk measure is a mapping that quantifies the risk associated with a random variable based on a given risk attitude. We note that it is commonplace to write \(\rho(X)\) or \(\rho_\tau(X)\) as shorthand for \(\rho(X, \tau)\). 

The precise interpretation of what ‘risk’ means depends on the specific risk measure used; different risk measures capture different risk-related aspects of a random variable. In essence, one can think of a risk measure as any functional that captures distributional characteristics of a random variable, typically beyond just its mean. However, an emphasis is usually placed on deriving risk measures that satisfy certain mathematical properties that can be meaningful in risk-based decision-making contexts. In Appendix \ref{appendix_risk_measures}, we provide formal definitions of the various classes of risk measures used in the context of RL, where each (non-mutually exclusive) class of risk measures can be thought of as satisfying a specific set of mathematical properties.

\subsection{Non-Continual Risk-Aware Reinforcement Learning}
\label{prelim_risk_rl}
Traditionally, non-continual risk-aware RL works have aimed to optimize either a \emph{static} risk measure (e.g. \citet{Mead2025-th}), or a \emph{nested} risk measure (e.g. \citet{Wang2025-om}). We now provide informal definitions of these two classes of risk measures, as well as an informal definition pertaining to the notion \emph{time consistency} as it relates to risk measures (see Appendix \ref{appendix_risk_measures} for formal definitions):

\textbf{Static Risk Measures (Definition \ref{defn_static})}: A \emph{static} risk measure, \(\rho_N\), evaluates risk at a fixed point in time, \(N\). In RL-based contexts, static risk measures can be useful for quantifying the risk of the return at the end of an episode. The primary appeal of static risk measures is their interpretability.

\textbf{Nested Risk Measures (Definition  \ref{defn_nested})}:
A \emph{nested} risk measure is a type of \emph{dynamic} risk measure (i.e., a sequence of one-step risk measures, \(\{\rho_t\}_{t=0}^N\); see Definition \ref{defn_dynamic}), that is constructed recursively from the one-step risk measures, such that the risk at time \(n\), \(\rho_n\), can be computed as \(\rho_{n}(X) \doteq \rho_n \big( \rho_{n+1}(\cdots \rho_{N}(X) \cdots ) \big)\). Nested risk measures are useful because they ensure \emph{time consistency} (described below). In RL-based contexts, nested risk measures are also useful because they can induce Bellman-like recursions with appealing dynamic programming-like properties. However, nested risk measures are typically hard to interpret. We note that typically in the RL literature, the terms ‘dynamic risk measure’ and ‘nested risk measure’ are used interchangeably; however, formally speaking, dynamic risk measures need not be time-consistent, nor have the nested structure.

\textbf{Time-Consistent Risk Measures (Definition \ref{defn_time_consistent})}: A dynamic risk measure, \(\{\rho_t(X)\}_{t=0}^N\), is said to be \emph{time-consistent} if, for all \(X, X' \in \mathcal{X}\) and all \(t < N\), we have that \(\rho_{t+1}(X) \leq \rho_{t+1}(X') \implies \rho_t(X) \leq \rho_t(X')\). Time-consistent risk measures can be appealing because they ensure that if one future outcome is deemed less risky than another at some time step, \(t\), then that same outcome is not deemed more risky than the other at any other time step. One way to think about time consistency in RL-based contexts is to ask the question: \emph{can the agent change its mind about how risky something is based on new information?} If the answer is \emph{yes}, then there exists a lack of time consistency.

\section{Continual Risk-Aware Reinforcement Learning}
\label{continual_risk_aware}
What does it mean for a lifelong agent to be risk-aware? In this section, we seek to answer this question by formalizing the notion of risk-awareness in the continual setting. In particular, we motivate and propose a risk-aware generalization of the continual RL framework proposed in \citet{Abel2025-ez}, such that we formalize continual risk-aware RL as an ongoing exchange of \emph{risk-aware observations} and actions between an agent and an environment (see Figure \ref{fig_overview}). 

To this end, let us begin by formalizing the notion of a risk-aware observation:

\vspace{2pt}

\begin{definition}[Risk-Aware Observation]
\label{defn_ra_obvs}
A \emph{risk-aware observation} at time \(t\), \(\tilde{O}_t \in \tilde{\mathcal{O}}\), is the tuple \((O_t, \tau_{_t})\), where \(O_t \in \mathcal{O}\) denotes an observation from the environment, \(\tau_{_t} \in \mathcal{T}\) denotes the agent's risk attitude, and \(\tilde{\mathcal{O}}\) denotes the space of risk-aware observations.
\end{definition}
In other words, a risk-aware observation can be interpreted as an observation from the environment that has been `augmented' based on the agent's risk attitude. The primary appeal of such a formulation is that it admits a natural connection to risk measures (see Section \ref{prelim_rm}). In particular, we can define an agent's \emph{risk assessment} over a sequence of observations, \(O_{a:b}\), as follows:

\vspace{2pt}

\begin{definition}[Risk Assessment]
\label{defn_risk_assessment}
An agent's \emph{risk assessment} over a sequence of observations collected during the interval \([a:b]\), \(U_{a:b} \doteq f(\tilde{O}_{a:b}) \doteq f(O_{a}, O_{a+1}, \ldots, O_{b-1}, O_{b}, \tau_{a}, \tau_{a+1}, \ldots, \allowbreak \tau_{b-1}, \tau_{b})\), is the output of the mapping \(f: \tilde{\mathcal{O}}^* \to \mathcal{U}\), where \(\tilde{\mathcal{O}}^*\) denotes the space of risk-aware observation sequences and \(\mathcal{U}\) denotes the space of risk assessments over the interval \([a:b]\).
\end{definition}
In essence, a risk assessment allows the agent to capture the risk associated with a sequence of observations collected during a given time interval, such that the risk-aware observations act as a `bridge' between the observations emitted from the environment and the output of the risk assessment. Importantly, if the output of the risk assessment, \(U_{a:b}\), is a scalar (i.e., \(\mathcal{U} = \mathbb{R}\)), then, by definition, the output of the risk assessment corresponds to a risk measure over \(O_{a:b}\). For example, \(U_{a:b}\) could correspond to a risk measure over a sum of observations collected during the time interval, or an average of risk measures over (one-step) observations collected during the time interval.

Next, we can use Definitions \ref{defn_ra_obvs} and \ref{defn_risk_assessment} to develop a formal definition for continual risk-aware RL:

\vspace{2pt}

\begin{definition}[Continual Risk-Aware RL]
\label{defn_risk_crl}
Continual risk-aware reinforcement learning is an exchange of risk-aware observations and actions between an agent and an environment that occurs over an infinite time horizon, such that the agent's behaviour is informed and guided by risk assessments performed over sequences of observations from the environment.
\end{definition}

As such, with Definition \ref{defn_risk_crl}, we have introduced a risk-aware generalization of the continual RL framework proposed in \citet{Abel2025-ez} (see Figure \ref{fig_overview}\textcolor{mylightblue}{b)}). Similarly, we can also generalize the definition of plasticity from \citet{Abel2025-ez} (i.e., Equation \eqref{eq_plasticity}) for a risk-aware agent as follows:
\begin{equation}
\label{eq_plasticity_risk}
    \mathfrak{R}_{\substack{a:b \\ c:d}} \doteq \mathbb{I}(\tilde{O}_{a:b} \to A_{c:d}),
\end{equation}
where \(\tilde{O}_{a:b}\) denotes a sequence of risk-aware observations over the interval \([a:b]\), and \(\mathfrak{R}\) denotes the \emph{risk-aware plasticity}. In words, the risk-aware plasticity \eqref{eq_plasticity_risk} captures the degree to which the agent's risk assessments and subsequent behaviour are influenced by changes in its risk attitude as well as by new observations from the environment.

\subsection{Axioms for Risk-Awareness in the Continual Setting} 
Having formalized the notion of risk-awareness in the context of continual RL, we now seek to establish the minimum requirements needed for the agent to be risk-aware in a continual learning setting. To this end, we argue that any limitations related to the agent's risk assessments should be direct consequences of the limitations of the agent itself. In particular, we focus on two well-argued limitations (or requirements) of continual RL agents (e.g. see \citet{Abel2023-xe}, \citet{Kumar2025-cj}), which are formalized as Assumption \ref{assumption_limitations} below:

\vspace{4pt}

\begin{assumption}
\label{assumption_limitations}
Any continual RL agent has a finite memory, and has a non-zero level of plasticity.
\end{assumption}
As such, based on the above limitations of the agent, we propose the following axioms as the minimum requirements needed for risk-awareness in the continual setting:
\begin{tcolorbox}[
    colback=gray!5,
    colframe=black!60,
    arc=3pt,
    boxrule=0.8pt,
    left=10pt, right=10pt, top=6pt, bottom=6pt]
\begin{proposition}
\label{proposition_axioms}
Under Assumption \ref{assumption_limitations}, the following two axioms are necessary and sufficient conditions for risk-awareness in the continual setting:
\begin{enumerate}[leftmargin=25pt,labelsep=6pt] 
    \vspace{2pt}
    \item \textbf{Feasibility Axiom}: The agent's risk assessments must be computable based on the agent's finite memory. That is, the risk assessments must be computable based on observations collected over a finite time interval \([a:b]\) with respect to the agent's risk attitude during that same time interval.
    \vspace{5pt}
    \item \textbf{Plasticity Axiom}: \(\mathbb{P}(\mathfrak{R} > 0 \text{ infinitely often}) = 1\). In words, this axiom states that the agent's risk assessments and subsequent behaviour should be influenced by changes in its risk attitude as well as by new observations from the environment infinitely often.
\end{enumerate}
\end{proposition}
\end{tcolorbox}

\subsection{Existing Risk Measures and Continual Learning}
\label{existing}
Having established the minimum requirements needed for risk-awareness in the continual setting, we now turn our attention towards answering the following question: do the existing classes of risk measures used in non-continual risk-aware RL (i.e., static and nested; see Section \ref{prelim_risk_rl}) satisfy the requirements outlined in Proposition \ref{proposition_axioms}? In other words, if we assume that the output of an agent's risk assessment is a scalar (this is formalized as Assumption \ref{assumption_scalar} below), are the existing classes of risk measures compatible with continual learning? As it turns out, static and nested risk measures \emph{do not} satisfy the requirements of Proposition \ref{proposition_axioms}. This is formally shown in Lemmas \ref{lemma_static_not_plastic} and \ref{lemma_nested_not_plastic} below:

\vspace{4pt}

\begin{assumption}
\label{assumption_scalar}
The output of an agent's risk assessment (see Definition \ref{defn_risk_assessment}) is a scalar.
\end{assumption}

\vspace{2pt}

\begin{lemma}
\label{lemma_static_not_plastic}
Let \(\rho_N\) denote a static risk measure (as per Definition \ref{defn_static}) defined for some time step, \(N\). The static risk measure, \(\rho_N\), does not satisfy both axioms of Proposition \ref{proposition_axioms}.
\end{lemma}

\begin{proof}
As \(N \to \infty\), the Feasibility axiom is violated since the computation of the risk would require the agent to have infinite memory. If \(N < \infty\), then any observation that occurs after time step \(N\) would not influence the risk assessment, thereby violating the Plasticity axiom. 
\end{proof}

\vspace{1pt}

\begin{lemma}
\label{lemma_nested_not_plastic}
Let \(\{\rho_t\}_{t=0}^\infty\) denote a nested risk measure and let \(\rho_{n}\) denote the assessed risk at time step \(n\), such that its value can be computed as: \(\rho_{n}(X) \doteq \rho_n \big( \rho_{n+1}(\cdots) \big)\) (see Definition \ref{defn_nested}). The nested risk measure, \(\{\rho_t\}_{t=0}^\infty\), does not satisfy both axioms of Proposition \ref{proposition_axioms}.
\end{lemma}
\begin{proof}
Consider the risk assessment at time \(n\), \(\rho_{n}(X) \doteq \rho_n \big( \rho_{n+1}(\cdots ) \big)\). By definition, the validity of this risk calculation is contingent on time consistency holding for all \(t \geq n\). However, this implies that future observations cannot influence how risky \(X\) is relative to other outcomes. Hence, either time consistency is satisfied, which violates the Plasticity axiom, or time consistency is violated, which invalidates the risk calculation at time \(n\), thereby violating the Feasibility axiom.
\end{proof}

\subsection{Ergodic Risk Measures for Continual Learning}
\label{ergodic}
In the previous section, we showed that static and nested risk measures are not compatible with continual learning. Based on this insight, we now propose a new class of \emph{ergodic risk measures}, and show that it is compatible with Proposition \ref{proposition_axioms}, and hence, continual learning:
\begin{tcolorbox}[
    colback=gray!5,
    colframe=black!60,
    arc=3pt,
    boxrule=0.8pt,
    left=10pt, right=10pt, top=6pt, bottom=6pt]
\begin{definition}[Ergodic Risk Measure]  
\label{defn_ergodic} 
Let \(\{\rho_t\}_{t=0}^\infty\) denote a sequence of conditional risk measures (i.e., a dynamic risk measure) defined on a probability space \((\Omega, \mathcal{F}, \mathbb{P})\) with filtration \((\mathcal{F}_t)_{t=0}^\infty\), where \(\mathcal{F}_t \subseteq \mathcal{F}\) denotes the \(\sigma\)-algebra representing the information available up until time step \(t \geq 0\), such that \(\rho_t : L^\infty(\mathcal{F}_\infty) \to L^\infty(\mathcal{F}_t)\), where \(L^\infty(\mathcal{F}_u)\) denotes the space of essentially bounded, \(\mathcal{F}_u\)-measurable random variables.\\ 

Suppose that for any time step, \(t > 0\), there exists a finite time interval, \([n:t]\), such that for all \(X \in L^\infty(\mathcal{F}_\infty)\), and some \(\hat{\rho}_t(X) \in L^\infty(\mathcal{G}_{n:t})\), where \(\mathcal{G}_{n:t}\) denotes the \(\sigma\)-algebra generated by the information from time step \(n\) up to time step \(t\), we have that: 
\begin{equation}
\label{eq_ergodicity}
\rho_t(X) - \hat{\rho}_t(X) \approx 0.
\end{equation}
That is, that the risk assessment at any given time step, \(t\), can be accurately computed based on a preceding, finite subset of the history, such that the risk assessment effectively ceases to depend on the history that occurs prior to the interval \([n:t]\). Furthermore, suppose that \(\{\rho_t\}_{t=0}^\infty\) is designed in such a way that time consistency, as defined in Definition \ref{defn_time_consistent}, is not strictly enforced. Then, we call \(\{\rho_t\}_{t=0}^\infty\) an \emph{ergodic risk measure}.
\end{definition}
\end{tcolorbox}

In essence, an ergodic risk measure is a non-nested dynamic risk measure, such that its one-step risk assessments, \(\rho_t\), can be \emph{accurately} computed using only observations collected over a rolling finite time interval, \([n:t]\). We note that the size of this interval can vary between different time steps. 

We now show that ergodic risk measures satisfy both of the axioms from Proposition \ref{proposition_axioms}:

\vspace{4pt}

\begin{lemma}
Let \(\{\rho_t\}_{t=0}^\infty\) denote an ergodic risk measure as defined in Definition \ref{defn_ergodic}. The ergodic risk measure, \(\{\rho_t\}_{t=0}^\infty\), satisfies both axioms of Proposition \ref{proposition_axioms}.
\end{lemma}
\begin{proof}
Since an ergodic risk measure is a dynamic risk measure such that time consistency (as defined in Definition \ref{defn_time_consistent}) is not strictly enforced, it satisfies the Plasticity axiom since the risk assessments and subsequent behaviour can be influenced by new observations and any changes in the agent's risk attitude infinitely often. Similarly, by definition, the risk at any time \(t\) can be computed using observations collected over a finite time interval with respect to the agent's risk attitude during that same time interval, thereby satisfying the Feasibility axiom. This completes the proof.
\end{proof}

\subsection{Ergodic Risk Measures as Reinforcement Learning Objectives}
\label{section_average_reward}
Having established that ergodic risk measures are compatible with continual learning, we now motivate a corresponding RL objective that can be used to perform risk-aware learning and decision-making in the continual setting. To this end, we will utilize the average-reward MDP formulation \citep{Puterman1994-dq} to realize the agent-environment interaction described in Section \ref{prelim_crl}. Our choice of the average-reward MDP formulation in this work follows prior work, such as \citet{Sharma2022-ua} and \citet{Kumar2025-cj}, which argue that the average-reward formulation's emphasis on long-term performance is a natural fit for continual learning settings.

More formally, consider a finite average-reward MDP, \(\mathcal{M} \doteq \langle \mathcal{S}, \mathcal{A}, \mathcal{R}, p \rangle\), where \(\mathcal{S}\) is a finite set of states, \(\mathcal{A}\) is a finite set of actions, \(\mathcal{R} \subset \mathbb{R}\) is a bounded set of rewards, and \(p: \mathcal{S}\, \times\, \mathcal{A}\, \times\, \mathcal{R}\, \times\,  \mathcal{S} \rightarrow{} [0, 1]\) is a probabilistic transition function that describes the dynamics of the environment, such that at each discrete time step, \(t = 0, 1, 2, \ldots\), an agent chooses an action, \(A_t \in \mathcal{A}\), based on its current state, \(S_t \in \mathcal{S}\), and receives a reward, \(R_{t+1} \in \mathcal{R}\), while transitioning to a (potentially) new state, \(S_{t+1}\), such that \(p(s', r \mid s, a) = \mathbb{P}(S_{t+1} = s', R_{t+1} = r \mid S_t = s, A_t = a)\).

Under this framework, the continual RL problem can be viewed as an infinite sequence of average-reward MDPs, \(\{\mathcal{M}^k\}_{k=1}^\infty\), such that \(\mathcal{M}^k \doteq \langle \mathcal{S}_k, \mathcal{A}_k, \mathcal{R}_k, p_k \rangle\), where each \(\mathcal{M}^k\) may differ in its state-space, action-space, reward function, and/or transition dynamics based on some indexing function, \(\omega: \mathbb{N} \to \mathbb{N}\), such that \(\omega(t) = i\) indicates that at time step \(t\), the agent is in MDP \(\mathcal{M}^i\). 

The agent’s goal in this framework is to construct a sequence of stationary policies, \(\{\pi^k\}_{k=1}^\infty\), that optimizes the long-run (or limiting) average-reward, \(\bar{r}\), which is defined as follows for a given stationary policy being followed at time \(t\), \(\pi_t \doteq \pi^{\omega(t)}\):
\begin{equation}
\label{eq_avg_reward}
\bar{r}^{\pi_t}(s) \doteq  \lim_{h \rightarrow{} \infty} \frac{1}{h - n_0} \sum_{n=n_0}^{h} \mathbb{E}[R_n \mid S_{n_0}=s, A_{n_0:n-1} \sim \pi^{\omega(n_0)}],
\end{equation}
where \(n_0 = \min\{y: \pi^{\omega(y)} = \pi_t\}\). When working with average-reward MDPs, it is commonplace to invoke \emph{ergodicity-like} assumptions (see below) so that the agent's objective \eqref{eq_avg_reward} remains well-defined and becomes independent of prior conditions (i.e., \(\bar{r}^{\pi_t}(s) = \bar{r}^{\pi_t}\)). In Appendix \ref{appendix_ergodicity}, we provide a mathematically rigorous discussion on the relationship between ergodicity and continual learning, in which we show that ergodicity does not preclude an agent's ability to be a continual learner.

\vspace{2pt}

\begin{assumption}[Unichain Assumption for Prediction]\label{assumption_unichain}
For a given MDP, \(\mathcal{M}^k\), the Markov chain induced by the policy, \(\pi^k\), is unichain. That is, the induced Markov chain consists of a single recurrent class and a potentially empty set of transient states.
\end{assumption}

\begin{assumption}[Communicating Assumption for Control] \label{assumption_communicating}
For a given MDP, \(\mathcal{M}^k\), the MDP has a single communicating class. That is, each state in the MDP is accessible from every other state in the MDP under some deterministic stationary policy.
\end{assumption}

For clarity, we note that the above assumptions need only apply independently for each MDP, \(\mathcal{M}^k\), rather than holistically for the entire sequence of MDPs, \(\{\mathcal{M}^k\}_{k=1}^\infty\). For example, Assumption \ref{assumption_communicating} only requires that each state in a given MDP, \(\mathcal{M}^k\), is accessible from every other state in \(\mathcal{M}^k\), but not the states of other MDPs, \(\mathcal{M}^{i \neq k}\). Now, to perform risk-aware decision-making in a continual setting, we can consider the risk-aware analogue to the (risk-neutral) average-reward RL objective \eqref{eq_avg_reward}:
\begin{equation}
\label{eq_risk_rl}
\bar{\rho}^{\pi_t}(s) \doteq \lim_{h \rightarrow{} \infty} \frac{1}{h - n_0} \sum_{n=n_0}^{h} \rho_\tau[R_n \mid S_{n_0}=s, A_{n_0:n-1} \sim \pi^{\omega(n_0)}].
\end{equation}
That is, we want to optimize some time-averaged measure of risk pertaining to the \emph{limiting per-step reward distribution} (see \citet{Rojas2026-dv}) induced when following a given policy, \(\pi_t\). However, the risk-aware objective presented in Equation \eqref{eq_risk_rl} is dependent on the initial conditions, which is not tractable in the continual setting. As such, as with the risk-neutral objective \eqref{eq_avg_reward}, we can apply an appropriate ergodicity-like assumption, such as Assumption \ref{assumption_unichain} or \ref{assumption_communicating}, such that the risk-aware objective \eqref{eq_risk_rl} becomes independent of prior conditions (i.e., \(\bar{\rho}^{\pi_t}(s) = \bar{\rho}^{\pi_t}\)).

Importantly, we can show that, under such assumptions, the risk-aware objective \eqref{eq_risk_rl} corresponds to an ergodic risk measure, thereby making it compatible with continual learning:

\vspace{4pt}

\begin{tcolorbox}[
    colback=gray!5,
    colframe=black!60,
    arc=3pt,
    boxrule=0.8pt,
    left=10pt, right=10pt, top=6pt, bottom=6pt]
\begin{theorem}
\label{theorem_ergodic_1}
Given an appropriate ergodicity-like assumption, such as Assumption \ref{assumption_unichain} or \ref{assumption_communicating}, and a stationary policy, \(\pi_t\), the risk-aware objective \eqref{eq_risk_rl} corresponds to an ergodic risk measure, as defined in Definition \ref{defn_ergodic}.
\end{theorem}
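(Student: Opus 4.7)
The plan is to first reinterpret \eqref{eq_ergodic_1} as a bona-fide sequence of conditional risk measures \(\{\rho_{t_0}\}_{t_0=0}^{\infty}\) acting on the reward process \((R_k)_{k\ge 0}\), and then verify the two properties that together define an ergodic risk measure in Definition~\ref{defn_ergodic}: asymptotic plasticity (Definition~\ref{defn_limit_plasticity}) and local time consistency (Definition~\ref{defn_local_time_consistency}). Concretely, for each starting time \(t_0\) I would let
\[
\rho_{t_0}(\omega) \doteq \lim_{n \to \infty} \frac{1}{n} \sum_{k=t_0}^{n} \rho\!\left[R_k \,\middle|\, \mathcal{F}_{t_0}\right]\!(\omega),
\]
so that the scalar \(\rho_{\pi_t}(s)\) appearing in \eqref{eq_ergodic_1} becomes the special case in which the information in \(\mathcal{F}_{t_0}\) is summarized by \(S_{t_0} = s\). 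The entire argument will then hinge on showing that Assumption~\ref{assumption_unichain} (or Assumption~\ref{assumption_communicating} in the control setting) forces this Cesàro limit to degenerate into a constant independent of \(\mathcal{F}_{t_0}\).

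For asymptotic plasticity, I would invoke the standard consequence of the unichain (resp.\ communicating) assumption: the induced Markov chain admits a unique stationary distribution \(\mu_{\pi_t}\), and the marginal law of \(R_k\) conditional on any information in \(\mathcal{F}_{t_0}\) converges weakly to the stationary reward distribution as \(k - t_0 \to \infty\), uniformly over the initial state. For any law-invariant, suitably regular \(\rho\) (a property enjoyed by CVaR and by the other coherent risk measures considered in this paper), this implies \(\rho[R_k \mid \mathcal{F}_{t_0}] \to \rho_\infty\), where \(\rho_\infty \doteq \rho[R_\infty]\) with \(R_\infty\) drawn from \(\mu_{\pi_t}\), and \(\rho_\infty\) is a constant independent of \(\mathcal{F}_{t_0}\). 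Cesàro summation of a convergent sequence then yields \(\rho_{t_0}(\omega) = \rho_\infty\) almost surely, so each \(\rho_{t_0}\) is a constant random variable. Choosing \(\hat\rho_t \doteq \rho_\infty\) gives \(\|\rho_t - \hat\rho_t\|_\infty = 0\) for every \(t\), trivially verifying Definition~\ref{defn_limit_plasticity} for any threshold \(n\).

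Local time consistency then follows immediately from the same degeneracy. Because \(\rho_t(X) = \rho_\infty(X)\) for every \(t\), the implication \(\rho_{t+1}(X) \le \rho_{t+1}(X') \Rightarrow \rho_t(X) \le \rho_t(X')\) collapses to the tautology \(\rho_\infty(X) \le \rho_\infty(X') \Rightarrow \rho_\infty(X) \le \rho_\infty(X')\), which holds for all \(X, X' \in L^\infty(\mathcal{F}_\infty)\); one may therefore take \(n = 0\) and \(m = \infty\) in Definition~\ref{defn_local_time_consistency}. The main obstacle I anticipate is not these two property checks, which collapse once the Cesàro limit has been identified, but rather the setup step: formally recasting the scalar objective \eqref{eq_ergodic_1} as a sequence of conditional risk measures on the reward process, and stating a minimal regularity condition on \(\rho\) that makes the per-step convergence \(\rho[R_k \mid \mathcal{F}_{t_0}] \to \rho_\infty\) rigorous (and that allows the limit and the Cesàro average to be interchanged). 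Once that interpretive groundwork is laid, both ergodic properties are essentially forced by the unichain/communicating assumption.
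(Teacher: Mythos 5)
Your proposal reaches the same conclusion as the paper --- under the unichain/communicating assumption the Ces\`aro limit in \eqref{eq_ergodic_1} degenerates to a constant independent of initial conditions, after which both asymptotic plasticity and local time consistency become essentially trivial --- but it gets there by a noticeably different mechanism. The paper's proof substitutes the limiting distribution \(\mu_{\pi_t}\) into every term, splits the sum at an arbitrary finite \(j\), observes that the finite prefix vanishes under the \(1/n\) normalization, and then appeals to Birkhoff's Ergodic Theorem for both properties. You instead establish \emph{per-term} convergence \(\rho[R_k \mid \mathcal{F}_{t_0}] \to \rho_\infty\) via convergence of the marginal law of \(R_k\) to the stationary reward distribution together with law-invariance and continuity of \(\rho\), and then invoke the elementary fact that Ces\`aro means of a convergent sequence converge to the same limit. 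Your route has two advantages: it makes explicit a regularity hypothesis on \(\rho\) (law-invariance and continuity under weak convergence) that the paper leaves implicit, and it correctly identifies that the substantive work is the setup step of recasting the scalar objective as a genuine sequence of conditional risk measures \(\{\rho_{t_0}\}\) --- a gap that is equally present, but unacknowledged, in the paper's own argument. Arguably your tool is also the more appropriate one: Birkhoff's theorem concerns almost-sure time averages of a function along sample paths, whereas the quantity being averaged here is a deterministic sequence of risk evaluations, for which convergence of marginals is what actually does the work. One caveat to note: for a periodic unichain the marginal laws need not converge (only their Ces\`aro averages do), so your per-term convergence step should be weakened to Ces\`aro convergence of the marginals, after which the rest of your argument goes through unchanged.
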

\end{tcolorbox}
\begin{proof}
First, we note that, by construction, the risk-aware objective \eqref{eq_risk_rl} does not strictly enforce time consistency. Next, we show that for any time step, \(t\), we can accurately compute the risk based on observations collected over a preceding, finite time interval. To this end, consider an MDP, \(\mathcal{M}^k\), through which an agent interacts with the environment for the possibly infinite time interval \([a:b]\). Under the ergodicity-like assumption, we can evaluate the risk-aware objective \eqref{eq_risk_rl} at time \(a \leq t \leq b\) as \(\bar{\rho}^{\pi_t}_t = \frac{1}{(t-a+1)}\sum_{n=a}^t\rho_{\tau}[R_n | A_{n} \sim \pi^{\omega(a)}]\). If \([a:b]\) is finite, then the desired condition is automatically satisfied. If \([a:b]\) is infinite, then, under the ergodicity-like assumption, we have, by Birkhoff's Ergodic Theorem \citep{Birkhoff1931-mx}, that \(\bar{\rho}^{\pi_t}_t\) converges to a stationary value as \(t \to \infty\). This implies that there exists a finite time step, \(j\), such that for all \(t \geq j\), \(\bar{\rho}^{\pi_t}_t\) converges to a stationary value. We therefore have two cases to check: 1) If \(t \leq j\), then, since \(j\) is finite, the risk can be computed based on observations collected over the finite time interval \([a:j]\), thereby satisfying the desired condition. 2) If \(t > j\), then the convergence of \(\bar{\rho}^{\pi_t}_t\) to a stationary value as \(t \to \infty\) implies that for any \(t > j\), we can find a finite time interval, \([n:t]\) (where \(n \geq j\)), that can be used to accurately compute the risk assessment, thereby satisfying the desired condition. This completes the proof.
\end{proof}

\begin{remark}
\label{remark_gap}
We note that ergodic risk measures are also compatible with the non-continual average-reward setting. That is, ergodic risk measures provide a formalism for risk-aware objectives of the form \eqref{eq_risk_rl}, which had been studied previously in non-continual settings (e.g. \citet{Xia2023-cq}, \citet{Rojas2025-bf}), but never formalized in the context of risk measure theory (see Appendix \ref{appendix_noncontinual}).
\end{remark}

\section{Case Study: CVaR as an Ergodic Risk Measure}
\label{cvar}
In this section, we present a case study in which we show how optimizing an ergodic risk measure can induce sensible risk-aware behaviour in a continual learning setting. That is, we show empirically how optimizing an ergodic risk measure can enable the agent to successfully adapt in situations where either its risk attitude changes, or the observations from the environment change.

To this end, we focus on optimizing the well-known conditional value-at-risk (CVaR) risk measure \citep{Rockafellar2000-xu}. More formally, consider a random variable \(X\) with a cumulative distribution function, \(F(x) = \mathbb{P}(X \le x)\). The (left-tail) \emph{value-at-risk (VaR)} of \(X\) with parameter \(\tau \in (0, 1)\) represents the \(\tau\)-quantile of \(X\), such that \(\text{VaR}_{\tau}(X) = \sup\{x \mid F(x) \le \tau\}\). If \(F(x)\) is continuous at \(x = \text{VaR}_{\tau}(X)\), then \(\text{CVaR}_{\tau}(X) = \mathbb{E}[X \mid X \le \text{VaR}_{\tau}(X)]\). Importantly, as per the results in Section \ref{continual_risk_aware}, and given Assumptions \ref{assumption_unichain} and \ref{assumption_communicating}, the CVaR risk measure can be formulated as the following continual learning objective that corresponds to an ergodic risk measure:
\begin{equation}
\label{eq_cvar_1}
\overline{\text{CVaR}}^{\pi_t} \doteq  \lim_{h \rightarrow{} \infty} \frac{1}{h - n_0} \sum_{n=n_0}^{h} \text{CVaR}_\tau[R_n \mid A_{n_0:n-1} \sim \pi^{\omega(n_0)}].
\end{equation}
In this case study, we optimize the CVaR objective \eqref{eq_cvar_1} via the \emph{RED CVaR Q-learning} algorithm proposed in \citet{Rojas2025-bf} (see Remark \ref{remark_gap}) in two continual learning tasks. The full set of experimental details can be found in Appendix \ref{appendix_experiments}.

In the first task, we consider a continual variation of the \emph{red-pill blue-pill (RPBP)} task \citep{Rojas2025-bf}. More specifically, in the regular (non-continual) RPBP task, an agent, at each time step, can take either a ‘red pill’, which takes them to the ‘red world’ state, or a ‘blue pill’, which takes them to the ‘blue world’ state. Each state has its own per-step reward distribution, such that for a sufficiently low CVaR parameter, \(\tau\), the red world state has a reward distribution with a lower (worse) mean but a higher (better) CVaR compared to the blue world state. In the continual variation of RPBP considered in this task, the risk attitude of the agent, which is governed by the CVaR parameter, \(\tau\), changes over time from risk-neutral (\(\tau \approx 1 \)) to risk-averse (\(\tau \approx 0 \)). In particular, we would expect that the agent first learns to stay in the blue world state, but then changes its preference to the red world state as its risk attitude changes from risk-neutral to risk-averse. More formally, this task can be viewed as a continual learning task with a changing reward function. We refer to this task as the \(\tau\)-RPBP task.

In the second task, we consider another continual variation of RPBP in which the per-step reward distributions of the states change over time, such that the agent is required to continually adapt and find the state with the better CVaR (given a fixed risk attitude, \(\tau\)). More formally, this task can be viewed as a continual learning task with a changing state-space (such that a given state is effectively replaced with a state with a different reward distribution). We refer to this task as the \(\mathcal{S}\)-RPBP task.

In terms of empirical results, Figures \ref{fig_results_1} and \ref{fig_results_2} show the resulting agent behaviour as learning progresses in both tasks. In particular, Figure \ref{fig_results_1} shows that in the \(\tau\)-RPBP task, the agent correctly learns to stay in the blue world state in the beginning, and then correctly changes its preference to the red world state once its risk attitude changes from risk-neutral to risk-averse. Similarly, Figure \ref{fig_results_2} shows that in the \(\mathcal{S}\)-RPBP task, the agent is able to continually adapt and find the state with the better CVaR.

\begin{figure}[h]
\centerline{\includegraphics[scale=0.52]{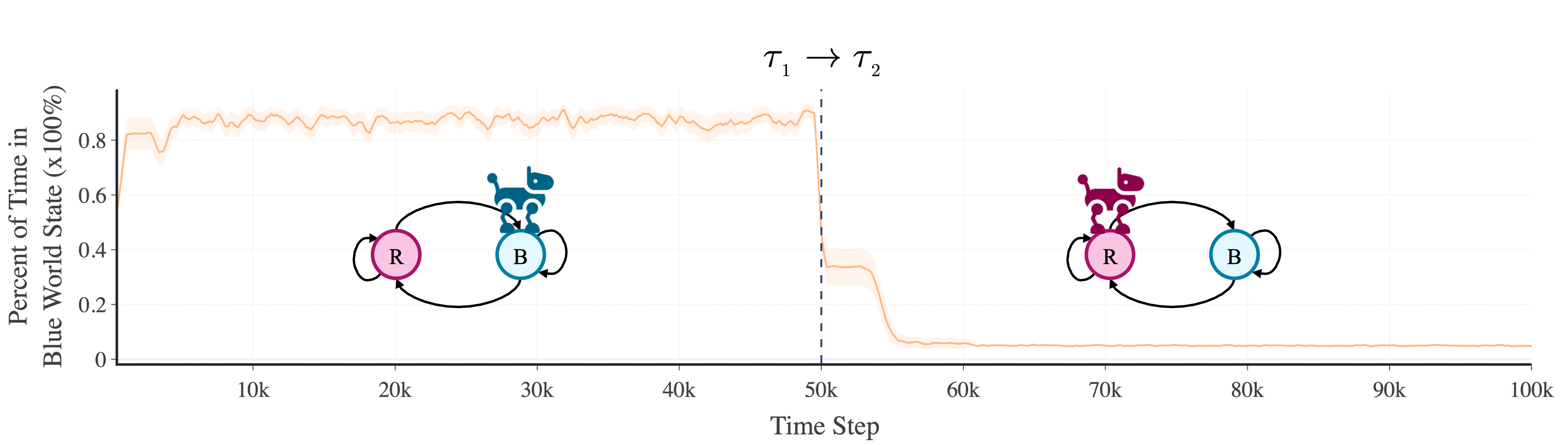}}
\caption{Rolling percent of time that the agent stays in the blue world state as learning progresses in the \(\tau\)-RPBP task. A solid line denotes the mean percent of time spent in the blue world state, and the shaded region denotes a 95\% confidence interval over 50 runs. As shown in the figure, the agent correctly learns to stay in the blue world state in the beginning, and then correctly changes its preference to the red world state once its risk attitude changes from risk-neutral to risk-averse.}
\label{fig_results_1}
\end{figure}

\begin{figure}[h]
\centerline{\includegraphics[scale=0.52]{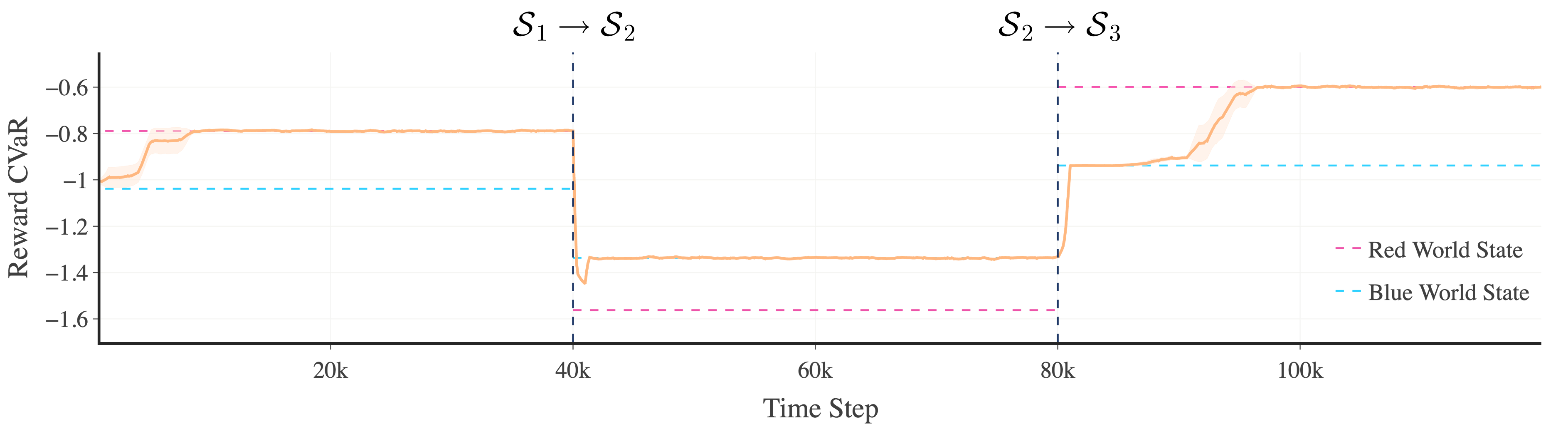}}
\caption{Rolling reward CVaR as learning progresses in the \(\mathcal{S}\)-RPBP task. A solid line denotes the mean CVaR, and the shaded region denotes a 95\% confidence interval over 10 runs. The blue and red dashed lines denote the reward CVaR of the blue and red world states, respectively. As shown in the figure, the agent is able to continually adapt and find the state with the better CVaR.}
\label{fig_results_2}
\end{figure}

\section{Discussion}
\label{discussion}
In this work, we took the first steps towards developing a risk-aware foundation for continual RL. In particular, we formalized the notion of risk-awareness in continual settings, and used the resulting framework to establish the conditions needed to enable an agent to be risk-aware in continual settings. We then examined the classical theory of risk measures, and showed that, in its current form, it is incompatible with continual learning. Then, building on this insight, we extended risk measure theory into the continual setting by introducing a new class of \emph{ergodic risk measures}, and showing that it is compatible with continual learning. Finally, we provided a CVaR-based case study, along with numerical results, which showed the intuitive appeal of ergodic risk measures in continual settings. 

More broadly, in \emph{non-continual} settings, ergodic risk measures offer a new and appealing alternative for risk-aware learning and decision-making. In particular, when utilizing the average-reward formulation, ergodic risk measures offer several advantages in comparison to the static and nested risk measures that are typically used in non-continual RL settings. A mathematically rigorous discussion of these advantages is provided in Appendix \ref{appendix_noncontinual}. In summary, we show that ergodic risk measures can retain some notion of time consistency while remaining highly interpretable, thereby capturing the appeal of both static and nested risk measures. Moreover, we show that ergodic risk measures are able to circumvent some of the challenges and non-trivialities that have been historically associated with risk-aware RL objectives in non-continual settings.

All in all, this work represents the first formal theoretical exploration of risk-aware decision-making in a continual learning setting. Moving forward, we believe that the theoretical foundation that has been established, including the introduction of a theoretically sound risk-aware objective that is stable-yet-adaptable, will enable further progress in the development of risk-aware lifelong agents.

\newpage

\bibliography{references}

\newpage

\appendix

\numberwithin{equation}{section}
\numberwithin{figure}{section}
\numberwithin{theorem}{section}

\section{Risk Measures}
\label{appendix_risk_measures}
In this appendix, we provide formal definitions of the various classes of risk measures used in the context of RL, where each (non-mutually exclusive) class of risk measures can be thought of as satisfying a specific set of mathematical properties:\\

\begin{definition}[Coherent Risk Measure; adapted from \citet{Artzner1999-bs}]
\label{defn_coherent}
A risk measure, \(\rho\), is called coherent if it satisfies the following four axioms for all random variables \(X, X' \in \mathcal{X}\):
\begin{enumerate}
    \item Monotonicity: If \(X \leq X'\) almost surely, then \(\rho(X) \leq \rho(X')\).
    \item Translation Invariance: For all \(c \in \mathbb{R}\), \(\rho(X + c) = \rho(X) + c\).
    \item Positive Homogeneity: For all \(\lambda \geq 0\), \(\rho(\lambda X) = \lambda \rho(X)\).
    \item Subadditivity: \(\rho(X + X') \leq \rho(X) + \rho(X')\).
\end{enumerate}
\end{definition}
Coherent risk measures are useful because they enforce a form of self-consistency in how risk is quantified and compared. In particular, monotonicity ensures that if a random variable, \(X\), always yields outcomes that are no worse than outcomes induced by another random variable, \(X'\), then \(X\) should be considered less risky than \(X'\). Translation invariance requires that adding a constant amount to \(X\) simply shifts its risk by that same amount. Positive homogeneity enforces scale-consistency, such that doubling the size of \(X\) also doubles its risk. Finally, subadditivity formalizes the idea of diversification, requiring that the risk of two combined random variables cannot exceed the sum of their individual risks. We note that the positive homogeneity and subadditivity properties also ensure that coherent risk measures are convex.\\

\begin{definition}[Static Risk Measure; adapted from \citet{Artzner1999-bs}]
\label{defn_static}
Let \((\Omega, \mathcal{F}, \mathbb{P})\) denote a probability space, and let \(\mathcal{F}_N \subseteq \mathcal{F}\) denote the \(\sigma\)-algebra representing information available at time \(N\). Denote by \(L^\infty(\mathcal{F}_N)\) the space of essentially bounded, \(\mathcal{F}_N\)-measurable random variables. A static risk measure is a mapping, \(\rho: L^\infty(\mathcal{F}_N) \to \mathbb{R}\), that assigns to each random variable, \(X \in L^\infty(\mathcal{F}_N)\), a single real value.
\end{definition}
In essence, a static risk measure evaluates risk at a fixed point in time. In RL-based contexts, static risk measures can be useful for quantifying the risk associated with the return at the end of an episode. One of the primary appeals of static risk measures is their interpretability.\\

\begin{definition}[Conditional Risk Measure; adapted from \citet{Ruszczynski2010-sm}]
\label{defn_conditional}
Let \((\Omega, \mathcal{F}, \mathbb{P})\) denote a probability space with filtration \((\mathcal{F}_t)_{t=0}^N\), where \(\mathcal{F}_t \subseteq \mathcal{F}\) represents the information available up to time \(t\). Denote by \(L^\infty(\mathcal{F}_N)\) the space of essentially bounded, \(\mathcal{F}_N\)-measurable random variables, and by \(L^\infty(\mathcal{F}_t)\) the space of essentially bounded, \(\mathcal{F}_t\)-measurable random variables. A conditional risk measure at time \(t\) is a mapping, \(\rho_t: L^\infty(\mathcal{F}_N) \to L^\infty(\mathcal{F}_t)\), that assigns to each random variable, \(X \in L^\infty(\mathcal{F}_N)\), a conditional risk evaluation, \(\rho_t(X)\), that is \(\mathcal{F}_t\)-measurable and satisfies the following monotonicity property: if \(X \leq X'\) almost surely, then \(\rho_t(X) \leq \rho_t(X')\). 
\end{definition}
In essence, a conditional risk measure at time \(t\) is a mapping that evaluates the risk of future outcomes (e.g. at time \(N > t\)) based on the information available up to and including time \(t\). The monotonicity property ensures that if a future outcome, \(X\), always yields less loss (or more reward) than another outcome, \(X'\), then \(X\) is never assigned a higher risk than \(X'\).\\

\begin{definition}[Dynamic Risk Measure; adapted from \citet{Ruszczynski2010-sm}]
\label{defn_dynamic}
Let \((\Omega, \mathcal{F}, \mathbb{P})\) denote a probability space with filtration \((\mathcal{F}_t)_{t=0}^N\). A dynamic risk measure is a \emph{sequence} of conditional risk measures, \(\{\rho_t\}_{t=0}^N\), where each \(\rho_t: L^\infty(\mathcal{F}_N) \to L^\infty(\mathcal{F}_t)\) assigns to every random variable \(X \in L^\infty(\mathcal{F}_N)\) a conditional risk evaluation, \(\rho_t(X)\), that is \(\mathcal{F}_t\)-measurable.
\end{definition}
In essence, a dynamic risk measure is a time-indexed sequence of conditional risk measures. In RL-based contexts, dynamic risk measures can be useful for capturing the sequential nature of decision-making (i.e., that actions taken at each time step can potentially influence future outcomes, and hence, future risk evaluations).\\

\begin{definition}[Time-Consistent Risk Measure; adapted from \citet{Boda2006-tw}]
\label{defn_time_consistent}
Let \(\{\rho_t\}_{t=0}^N\) be a dynamic risk measure defined on a probability space \((\Omega, \mathcal{F}, \mathbb{P})\) with filtration \((\mathcal{F}_t)_{t=0}^N\). The dynamic risk measure is said to be time-consistent if, for all random variables \(X, X' \in L^\infty(\mathcal{F}_N)\) and all \(t < N\), 
\begin{equation}
\label{eq_time_consistent}
\rho_{t+1}(X) \leq \rho_{t+1}(X') \implies \rho_t(X) \leq \rho_t(X').
\end{equation}
\end{definition}
Time-consistent risk measures can be appealing because they ensure that if one future outcome is deemed less risky than another at some time step, \(t\), then that same outcome is not deemed more risky than the other at any other time step. One way to think about time consistency in RL-based contexts is to ask the question: \emph{can the agent change its mind about how risky something is based on new information?} If the answer is \emph{yes}, then there exists a lack of time consistency.\\

\begin{definition}[Nested Risk Measure; adapted from \citet{Ruszczynski2010-sm}]
\label{defn_nested}
A nested risk measure is a dynamic risk measure, \(\{\rho_t\}_{t=0}^N\), that is constructed recursively from one-step conditional risk measures. More formally, given one-step conditional risk measures, \(\rho_t: L^\infty(\mathcal{F}_{t+1}) \to L^\infty(\mathcal{F}_t)\), the risk at time \(n\), \(\rho_n\), can be evaluated as:
\begin{equation}
\label{eq_nested}
\rho_{n}(X) \doteq \rho_n \big( \rho_{n+1}(\cdots \rho_{N}(X) \cdots ) \big).
\end{equation}
\end{definition}
Nested risk measures are useful because they ensure time consistency, such that the risk evaluation at earlier times is consistent with future evaluations. In RL-based contexts, nested risk measures are also useful because they can induce Bellman-like recursions with appealing dynamic programming-like properties. One of the drawbacks of nested risk measures is that they are typically hard to interpret. We note that typically in the RL literature, the terms ‘dynamic risk measure’ and ‘nested risk measure’ are used interchangeably; however, formally speaking, dynamic risk measures need not be time-consistent, nor have the nested structure.\\ 

\begin{definition}[Markov Risk Measure; adapted from \citet{Ruszczynski2010-sm}]
\label{defn_markov}
Let \((\Omega, \mathcal{F}, \mathbb{P})\) denote a probability space, and let \(\{S_t\}_{t=0}^N\) denote a Markov process where each state, \(S_t : \Omega \to \mathcal{S}\), takes values in a measurable state-space, \((\mathcal{S}, \mathcal{B}(\mathcal{S}))\), such that \(\mathcal{F}_t \doteq \sigma(S_0,\ldots,S_t)\). Here, \(\mathcal{B}(\mathcal{S})\) denotes the Borel \(\sigma\)-algebra on \(\mathcal{S}\). A one-step conditional risk measure, \(\rho_t : L^\infty(\mathcal{F}_{t+1}) \to L^\infty(\mathcal{F}_t)\), is called a Markov risk measure if, for every \(X \in L^\infty(\mathcal{F}_{t+1})\), the risk assessment satisfies \(\rho_t(X) \in L^\infty(\sigma(S_t))\), where \(\sigma(S_t) \subseteq \sigma(S_0,\ldots,S_t) = \mathcal{F}_t\). That is, the risk assessment \(\rho_t(X)\) is \(\sigma(S_t)\)-measurable, such that it only depends on the current state, \(S_t\).
\end{definition}
Markov risk measures are useful because they enforce a one-step time dependence structure that makes them compatible with MDP-based RL solution methods. From a risk perspective, this means that the assessment of risk at each time step only depends on the information available at that time step, rather than the entire history of past information. Note that in an MDP setting (as opposed to the simpler Markov process described in Definition \ref{defn_markov}), the ‘state’ can be characterized as a state-action pair. That is, \(\rho_t(X) \in L^\infty(\sigma(S_t, A_t))\) for some \(A_t\) in a measurable action-space, \(\mathcal{A}\).


\newpage

\section{Numerical Experiments}
\label{appendix_experiments}
This appendix contains details regarding the numerical experiments performed as part of this work. The overall aim of the experiments was to show how optimizing an ergodic risk measure can induce sensible risk-aware behaviour in a continual learning setting. That is, we aimed show empirically how optimizing an ergodic risk measure can enable the agent to successfully adapt in situations where either its risk attitude changes, or the observations from the environment change.

To this end, we focused on optimizing the well-known conditional value-at-risk (CVaR) risk measure \citep{Rockafellar2000-xu}. More formally, consider a random variable, \(X\), with a finite mean on a probability space, \((\Omega, \mathcal{F}, \mathbb{P})\), and with a cumulative distribution function, \(F(x) = \mathbb{P}(X \le x)\). The (left-tail) \emph{value-at-risk (VaR)} of \(X\) with parameter \(\tau \in (0, 1)\) represents the \(\tau\)-quantile of \(X\), such that \(\text{VaR}_{\tau}(X) = \sup\{x \mid F(x) \le \tau\}\). When \(F(x)\) is continuous at \(x = \text{VaR}_{\tau}(X)\), \(\text{CVaR}_{\tau}(X)\) can be interpreted as the expected value of \(X\) conditioned on \(X\) being less than or equal to \(\text{VaR}_{\tau}(X)\), such that \(\text{CVaR}_{\tau}(X) = \mathbb{E}[X \mid X \le \text{VaR}_{\tau}(X)]\). 

As per Section \ref{cvar}, the CVaR risk measure can be formulated as the continual learning objective \eqref{eq_cvar_1}, which is displayed below as Equation \eqref{eq_cvar_1_appendix} for convenience:
\begin{equation}
\label{eq_cvar_1_appendix}
\overline{\text{CVaR}}^{\pi_t} \doteq  \lim_{h \rightarrow{} \infty} \frac{1}{h - n_0} \sum_{n=n_0}^{h} \text{CVaR}_\tau[R_n \mid A_{n_0:n-1} \sim \pi^{\omega(n_0)}].
\end{equation}
In other words, we aimed to optimize the (left-tail) conditional value-at-risk associated with the limiting per-step reward distribution induced when following stationary policy \(\pi_t\). More specifically, our aim was to optimize the CVaR objective \eqref{eq_cvar_1_appendix} in two continual learning tasks via the \emph{RED CVaR Q-learning} algorithm proposed in \citet{Rojas2025-bf}. Importantly, the RED CVaR Q-learning algorithm was designed to optimize the CVaR associated with the long-run per-step reward distribution of an average-reward MDP, which precisely corresponds to the continual learning objective \eqref{eq_cvar_1_appendix}. The RED CVaR Q-learning algorithm (Algorithm \ref{alg_cvar_1}) is shown below:
\begin{algorithm}
   \caption{RED CVaR Q-Learning (Tabular) \citep{Rojas2025-bf}}
   \label{alg_cvar_1}
\begin{algorithmic}
    \STATE {\bfseries Input:} the policy \(\pi\) to be used (e.g., \(\varepsilon\)-greedy)
    \STATE {\bfseries Algorithm parameters:} step size parameters \(\alpha\), \(\alpha_{_{\text{CVaR}}}\), \(\alpha_{_{\text{VaR}}}\); CVaR parameter \(\tau\)
    \STATE Initialize \(Q(s, a) \: \forall s, a\) (e.g. to zero)
    \STATE Initialize CVaR arbitrarily (e.g. to zero)
    \STATE Initialize VaR arbitrarily (e.g. to zero)
    \STATE Obtain initial \(S\)
    \WHILE{still time to train}
        \STATE \(A \leftarrow\) action given by \(\pi\) for \(S\)
        \STATE Take action \(A\), observe \(R, S'\)
        \STATE \(\tilde{R} = \text{VaR} - \frac{1}{\tau} \max \{\text{VaR} - R, 0\}\)
        \STATE \(\delta = \tilde{R} - \text{CVaR} + \max_a Q(S', a) - Q(S, A)\)
        \STATE \(Q(S, A) = Q(S, A) + \alpha\delta\)
        \STATE \(\text{CVaR} = \text{CVaR} + \alpha_{_{\text{CVaR}}}\delta\)
        \IF{\(R \geq \text{VaR}\)}
            \STATE \(\text{VaR} = \text{VaR} + \alpha_{_{\text{VaR}}}(\delta + \text{CVaR} - \text{VaR})\)
        \ELSE
            \STATE \(\text{VaR} = \text{VaR} + \alpha_{_{\text{VaR}}}\left(\left(\frac{\tau}{ \tau - 1}\right)\delta + \text{CVaR} - \text{VaR}\right)\)
        \ENDIF
        \STATE \(S = S'\)
    \ENDWHILE
    \STATE return \(Q\)
\end{algorithmic}
\end{algorithm}

In terms of the two continual learning tasks considered in this work, we considered two continual variations of the \emph{red-pill blue-pill (RPBP)} task \citep{Rojas2025-bf}. More specifically, in the regular (non-continual) RPBP task, an agent, at each time step, can take either a ‘red pill’, which takes them to the ‘red world’ state, or a ‘blue pill’, which takes them to the ‘blue world’ state. Each state has its own characteristic per-step reward distribution, such that for a sufficiently low CVaR parameter, \(\tau\), the red world state has a reward distribution with a lower (worse) mean but a higher (better) CVaR compared to the blue world state. That is, in the regular RPBP task, for a sufficiently low CVaR parameter, \(\tau\), we would expect a risk-neutral agent to learn a policy that prefers to stay in the blue world, and a risk-averse agent to learn a policy that prefers to stay in the red world. 

We now discuss the two continual variations of the RPBP task considered in this work:

\subsection{\(\tau\)-RPBP Task}
In the first task, we considered a continual variation of RPBP, such that the \emph{risk attitude} of the agent, which is governed by the CVaR parameter, \(\tau\), changes over time from risk-neutral (\(\tau = 0.9 \)) to risk-averse (\(\tau = 0.1 \)). In particular, we would expect that the agent first learns to stay in the blue world state, but then changes its preference to the red world state as its risk attitude changes from risk-neutral to risk-averse. More formally, this task can be viewed as a continual learning task, \(\{\mathcal{M}^k\}_{k=1}^2\), with a changing reward function, such that \(\mathcal{M}^k \doteq \langle \mathcal{S}, \mathcal{A}, \mathcal{R}_k, p \rangle\), where
\begin{equation}
\label{eq_cvar_2}
\tilde{R}_{t, k} = \text{VaR}_t - \frac{1}{\tau_{_k}}(\text{VaR}_t - R_t)^{+} \text{ (see Algorithm \ref{alg_cvar_1})},
\end{equation}
with \(\tau_{_1} = 0.9\) and \(\tau_{_2} = 0.1\). The indexing function, \(\omega\), was defined such that \(\omega(t) = 1\) for \(t < 50{,}000\), and \(\omega(t) = 2\) otherwise. That is, the agent's risk attitude changes from risk-neutral to risk-averse at \(t = 50{,}000\).

In terms of the hyperparameters used with the RED CVaR Q-learning algorithm, we used the tuned hyperparameters from \citet{Rojas2025-bf}. That is, \(\alpha = \text{2e-2}\), \(\alpha_{_{\text{CVaR}}} \doteq \eta_{_\text{CVaR}}\alpha\), where \(\eta_{_\text{CVaR}} = \text{1e-1}\), and \(\alpha_{_{\text{VaR}}} \doteq \eta_{_\text{VaR}}\alpha\), where \(\eta_{_\text{VaR}} = \text{1e-1}\). We used an \(\varepsilon\)-greedy policy with a fixed epsilon of 0.1, and set all initial guesses to zero. The results for this \emph{\(\tau\)-RPBP} task are shown in Figure \ref{fig_results_1}.

\subsection{\(\mathcal{S}\)-RPBP Task}
In the second task, we considered another continual variation of RPBP. In this variation, the characteristic per-step reward distributions of the states change over time, such that the agent is required to continually adapt and find the state with the better CVaR (given a fixed risk attitude, \(\tau\)). More formally, this task can be viewed as a continual learning task with a changing state-space, such that a given state is effectively replaced with a state with a different per-step reward distribution. In other words, we have a continual learning task, \(\{\mathcal{M}^k\}\), such that \(\mathcal{M}^k \doteq \langle \mathcal{S}_k, \mathcal{A}, \mathcal{R}, p \rangle\). In particular, for a given \(\mathcal{S}_k\), the red world state reward distribution is characterized as a Gaussian distribution with mean, \(\mu_{\text{red}}\), and standard deviation, \(\sigma_{\text{red}}\). Conversely, the blue world state reward distribution is characterized as a mixture of two Gaussian distributions with means, \(\mu_{\text{blue-a}}\) and \(\mu_{\text{blue-b}}\), standard deviations, \(\sigma_{\text{blue-a}}\) and \(\sigma_{\text{blue-b}}\), and a mixing coefficient of 0.5. 

In the experiment performed, we set \(k \in \{1, 2, 3\}\). For all \(k\) and all states, we set the standard deviation to 0.05. For \(k=1\), we set \(\mu_{\text{red}} = -0.7\), \(\mu_{\text{blue-a}} = -1.0\), and \(\mu_{\text{blue-b}} = -0.2\). For \(k=2\), we set \(\mu_{\text{red}} = -1.5\), \(\mu_{\text{blue-a}} = -1.25\), and \(\mu_{\text{blue-b}} = -1.0\). For \(k=3\), we set \(\mu_{\text{red}} = -0.5\), \(\mu_{\text{blue-a}} = -0.9\), and \(\mu_{\text{blue-b}} = -0.5\). The indexing function, \(\omega\), was defined such that \(\omega(t) = 1\) for \(t < 40{,}000\), \(\omega(t) = 2\) for \(40{,}000 \leq t < 80{,}000\), and \(\omega(t) = 3\) otherwise.

In terms of the hyperparameters used with the RED CVaR Q-learning algorithm, we used the tuned hyperparameters from \citet{Rojas2025-bf}. That is, \(\alpha = \text{2e-2}\), \(\alpha_{_{\text{CVaR}}} \doteq \eta_{_\text{CVaR}}\alpha\), where \(\eta_{_\text{CVaR}} = \text{1e-1}\), and \(\alpha_{_{\text{VaR}}} \doteq \eta_{_\text{VaR}}\alpha\), where \(\eta_{_\text{VaR}} = \text{1e-1}\). We used a fixed CVaR parameter, \(\tau\), of 0.25, an \(\varepsilon\)-greedy policy with a fixed epsilon of 0.1, and set all initial guesses to zero. The results for this \emph{\(\mathcal{S}\)-RPBP} task are shown in Figure \ref{fig_results_2}.

\newpage

\section{Ergodic Risk Measures in Non-Continual Settings}
\label{appendix_noncontinual}
In this appendix, we provide a theoretical analysis that shows that in \emph{non-continual} settings, ergodic risk measures offer a new and appealing alternative for risk-aware learning and decision-making. First, we show that the risk-aware RL objective \eqref{eq_risk_rl} (which corresponds to an ergodic risk measure) can retain some notion of time consistency while remaining highly interpretable, thereby capturing the appeal of both static and nested risk measures. Then, we show that the risk-aware RL objective \eqref{eq_risk_rl} is able to circumvent some of the challenges and non-trivialities that have been historically associated with risk-aware RL objectives in non-continual settings.

For convenience, we simplify our notation for the risk-aware RL objective \eqref{eq_risk_rl} in non-continual settings as follows:
\begin{equation}
\label{eq_risk_rl_noncontinuous}
\bar{\rho}^{\pi}(s) \doteq \lim_{h \rightarrow{} \infty} \frac{1}{h} \sum_{n=1}^{h} \rho[R_n \mid S_{0}=s, A_{0:n-1} \sim \pi],
\end{equation}
such that it can be evaluated at time \(t\) as:
\begin{equation}
\label{eq_risk_rl_noncontinuous_eval}
\bar{\rho}^{\pi}_t(s) = \frac{1}{t} \sum_{n=1}^{t} \rho[R_n \mid S_{0}=s, A_{0:n-1} \sim \pi],
\end{equation}
where \(\pi\) denotes some policy and \(\rho[R_n | \cdot]\) corresponds to a risk functional which captures some distributional characteristic of the one-step rewards (e.g. variance, CVaR, etc.) with respect to some fixed risk attitude. Here, the non-continual risk-aware objective \eqref{eq_risk_rl_noncontinuous} can be viewed as being a specific instance of the more general risk-aware objective \eqref{eq_risk_rl}. Accordingly, given that we showed in Theorem \ref{theorem_ergodic_1} that the risk-aware objective \eqref{eq_risk_rl} corresponds to an ergodic risk measure, it directly follows that the non-continual risk-aware objective \eqref{eq_risk_rl_noncontinuous} also corresponds to an ergodic risk measure. This is formalized as Corollary \ref{corollary_ergodic} below:

\vspace{4pt}

\begin{corollary}
\label{corollary_ergodic}
Given an appropriate ergodicity-like assumption, such as Assumption \ref{assumption_unichain} or \ref{assumption_communicating}, and a stationary policy, \(\pi\), the non-continual risk-aware objective \eqref{eq_risk_rl_noncontinuous} corresponds to an ergodic risk measure, as defined in Definition \ref{defn_ergodic}.
\end{corollary}
\begin{proof}
This follows directly from Theorem \ref{theorem_ergodic_1}.
\end{proof}

For clarity, we note that as with the continual risk-aware objective \eqref{eq_risk_rl}, when we apply an appropriate ergodicity-like assumption, such as Assumption \ref{assumption_unichain} or \ref{assumption_communicating}, the risk-aware objective \eqref{eq_risk_rl_noncontinuous} becomes independent of prior conditions (i.e., \(\bar{\rho}^{\pi}(s) = \bar{\rho}^{\pi}\)). 

\subsection{Ergodic Risk Measures Retain Some Notion of Time Consistency}
In this section, we show that ergodic risk measures can retain some notion of time consistency while remaining highly interpretable, thereby capturing the appeal of both static and nested risk measures. In particular, we will show below that the risk-aware RL objective \eqref{eq_risk_rl_noncontinuous} satisfies a weaker notion of time consistency, which we call \emph{local time consistency}:

\vspace{4pt}

\begin{definition}(Local Time Consistency)  
\label{defn_local_time_consistency}  
Let \((\Omega, \mathcal{F}, \mathbb{P})\) denote a probability space with filtration \((\mathcal{F}_t)_{t=0}^\infty\), and let \(\{\rho_t\}_{t=0}^\infty\) denote a sequence of conditional risk measures, such that \(\rho_t : L^\infty(\mathcal{F}_\infty) \to L^\infty(\mathcal{F}_t)\). The sequence \(\{\rho_t\}_{t=0}^\infty\) is said to satisfy the \emph{local time consistency} property if there exists a time step, \(n \ge 0\), and a possibly infinite horizon length, \(m \in \mathbb{N}\cup\{\infty\}\), such that, for all \(n \le t < n + m\) and all \(X,X' \in L^\infty(\mathcal{F}_\infty)\), we have that: \(\rho_{t+1}(X) \le \rho_{t+1}(X') \ \implies \ \rho_t(X) \le \rho_t(X')\). That is, time consistency holds within some subset of the time-horizon. Note that when \(n = 0\) and \(m \to \infty\), this reduces to the standard definition of time consistency (Definition~\ref{defn_time_consistent}). 
\end{definition}

In essence, the above definition for local time consistency only requires that time consistency holds for some subset of the history, rather than the entire history. We note that such a notion of time consistency could be useful in a continual learning setting as it could provide some measure of \emph{stability}. That is, while we want the agent to have the flexibility to change its risk preferences over time, it would likely be problematic if the agent changed its risk preferences at every time step.

We will also make use of a relaxed variant of this property, which we refer to as \emph{\(\epsilon\)-local time consistency}, to account for the fact that in RL we are often dealing with approximated risk evaluations up to some arbitrary error threshold:

\newpage

\begin{definition}(\(\epsilon\)-Local Time Consistency)
\label{defn_eps_local_time_consistency}
Let \((\Omega, \mathcal{F}, \mathbb{P})\) denote a probability space with filtration \((\mathcal{F}_t)_{t=0}^\infty\), and let \(\{\rho_t\}_{t=0}^\infty\) denote a sequence of conditional risk measures, such that \(\rho_t : L^\infty(\mathcal{F}_\infty) \to L^\infty(\mathcal{F}_t)\). The sequence \(\{\rho_t\}_{t=0}^\infty\) is said to satisfy the \emph{\(\epsilon\)-local time consistency} property if, for a given \(\epsilon > 0\), there exists a time step, \(n \ge 0\), and a possibly infinite horizon length, \(m \in \mathbb{N}\cup\{\infty\}\), such that, for all \(n \le t < n + m\) and all \(X,X' \in L^\infty(\mathcal{F}_\infty)\), we have that: \(\rho_{t+1}(X) \le \rho_{t+1}(X') \ \implies \ \rho_t(X) \le \rho_t(X') + \epsilon\). That is, time consistency holds within some subset of the time-horizon up to an error threshold, \(\epsilon\).
\end{definition}

We will now show that, under ergodicity-like assumptions, the risk-aware RL objective \eqref{eq_risk_rl_noncontinuous} satisfies the \(\epsilon\)-local time consistency property, such that \(\epsilon \to 0\) as \(t \to \infty\):

\vspace{4pt}

\begin{lemma}
\label{lemma_local_time_consistency}
Given an appropriate ergodicity-like assumption, such as Assumption \ref{assumption_unichain} or \ref{assumption_communicating}, and a stationary policy, \(\pi\), the risk-aware objective \eqref{eq_risk_rl_noncontinuous} satisfies the \(\epsilon\)-local time consistency property (as defined in Definition \ref{defn_eps_local_time_consistency}), such that \(\epsilon \to 0\) as \(t \to \infty\).
\end{lemma}

\begin{proof}
Let \(\bar{\rho}^{\pi}_t\) denote the risk evaluated at time \(t\) (as per Equation \eqref{eq_risk_rl_noncontinuous}) under the stationary policy. Under the ergodicity-like assumption, we have, by Birkhoff's Ergodic Theorem \citep{Birkhoff1931-mx}, that \(\bar{\rho}^{\pi}_t\) converges to a stationary value as \(t \to \infty\). This implies that there exists a finite time step, \(j\), such that for all \(t \geq j\), \(\bar{\rho}^{\pi}_t\) converges to a stationary value. As such, for all \(t \geq j\), the evaluated risk at time \(t\) becomes arbitrarily indistinguishable from the evaluated risk at time \(t+1\), such that \(|\bar{\rho}^{\pi}_{t+1}(X) - \bar{\rho}^{\pi}_t(X)| \leq \tilde{\epsilon}\) with respect to some error threshold, \(\tilde{\epsilon}\), such that \(\tilde{\epsilon} \to 0\) as \(t \to \infty\). Accordingly, for any future sequences, \(X\) and \(X'\), it directly follows that \(\bar{\rho}^{\pi}_{t+1}(X) \le \bar{\rho}^{\pi}_{t+1}(X') \implies \bar{\rho}^{\pi}_t(X) \le \bar{\rho}^{\pi}_t(X') + 2\tilde{\epsilon}\) for all \(t \geq j\), thereby satisfying the \(\epsilon\)-local time consistency property (where \(\epsilon = 2\tilde{\epsilon}\)), such that \(\tilde{\epsilon} \to 0\) as \(t \to \infty\). 
\end{proof}

As such, with Lemma \ref{lemma_local_time_consistency}, we have shown that our interpretable risk-aware objective \eqref{eq_risk_rl_noncontinuous}, satisfies some notion of time consistency, thereby capturing the appeal of both static (i.e., interpretability) and nested (i.e., time consistency) risk measures in the non-continual setting.

\subsection{Ergodic Risk Measures and Optimal Policies}
In this section, we show that the risk-aware RL objective \eqref{eq_risk_rl_noncontinuous} is able to circumvent some of the challenges and non-trivialities that have been historically associated with risk-aware RL objectives in non-continual settings. In particular, it is well-known that risk-aware RL objectives often admit non-Markovian optimal policies \citep{Bellemare2023-mn, Bowling2023-tm}. However, we will show below that the risk-aware objective \eqref{eq_risk_rl_noncontinuous}, which corresponds to an ergodic risk measure, provably admits a stationary, deterministic optimal policy, given the following assumption:

\vspace{4pt}

\begin{assumption}
\label{assumption_risk_functional}
Consider the risk functional in Equation \eqref{eq_risk_rl_noncontinuous}, \(\rho\), which captures some distributional characteristic of the limiting per-step reward distribution induced when following a given policy. We assume that \(\rho\) is bounded, and that, under an ergodicity-like assumption, the limiting per-step reward distribution induced by the policy contains all the information required to evaluate \(\rho\).
\end{assumption}

In essence, the above assumption ensures that the per-step reward distribution contains all the information needed to evaluate the risk functional, \(\rho\), at a given time step. In the context of the average-reward MDP formulation, we find this to be a reasonable assumption to make, given that in the risk-aware setting, we aim to find a policy that induces a limiting per-step reward distribution that is optimal with respect to some measure of risk. For example, if our risk functional corresponds to the variance of the distribution (such that our aim is to find a policy that induces a limiting per-step reward distribution with the best long-run variance), then Assumption \ref{assumption_risk_functional} implies that if we know this distribution, then we can compute its variance without needing additional information. 

Next, given Assumption \ref{assumption_risk_functional}, we can make use of Theorem 8.9.3 of \citet{Puterman1994-dq} to prove the following result:

\vspace{4pt}

\begin{theorem}
\label{theorem_c_1}
Consider the risk-aware objective \eqref{eq_risk_rl_noncontinuous}. Suppose that, under an appropriate ergodicity-like assumption, such as Assumption \ref{assumption_unichain} or \ref{assumption_communicating}, this objective exists and is unique. Furthermore, suppose that Assumption \ref{assumption_risk_functional} holds, and that there exists some policy that is optimal with respect to the risk-aware objective \eqref{eq_risk_rl_noncontinuous}. Then, there exists a stationary, deterministic policy that is optimal with respect to the risk-aware objective \eqref{eq_risk_rl_noncontinuous}.
\end{theorem}
\begin{proof}
Consider the risk functional from Equation \eqref{eq_risk_rl_noncontinuous} under the ergodicity-like assumption: \(\rho[R_n \mid A_{0:n-1} \sim \pi]\). Given Assumption \ref{assumption_risk_functional}, we know that this risk functional can be computed entirely based on information from the per-step rewards, \(R_t\). Consequently, given that the per-step rewards are computed based on the current states and actions (i.e., \(S_t\) and \(A_t\)), the risk functional can be viewed as the following mapping: \(\mathcal{S} \times \mathcal{A} \to \mathbb{R}\), where \(\mathcal{S}\) and \(\mathcal{A}\) denote the state and action-spaces of an average-reward MDP. Accordingly, we can interpret \(\rho\) as a (`risk-aware') reward function, such that the risk-aware objective can be written as follows:
\begin{subequations}
\label{eq_c_1}
\begin{align}
\label{eq_c_1_1}
\bar{\rho}^{\pi} &\doteq  \lim_{h \rightarrow{} \infty} \frac{1}{h} \sum_{n=1}^{h} \rho[R_n \mid A_{0:n-1} \sim \pi]\\
\label{eq_c_1_2}
&= \sum_{s \in \mathcal{S}}\mu_{\pi}(s)\sum_{a \in \mathcal{A}}\pi(a | s)\hat{r}(s,a),
\end{align}
\end{subequations}
where \(\hat{r}(s,a)\) denotes the risk-aware reward function (for some \(s \in \mathcal{S}, a \in \mathcal{A}\)) that corresponds to the risk functional, \(\rho\), and \(\mu_{\pi}\) denotes the limiting distribution of states induced by policy \(\pi\) (we know that \(\mu_{\pi}\) exists and is unique given the ergodicity-like assumption).

As such, we can see from Equation \eqref{eq_c_1} that optimizing our risk-aware objective, \(\bar{\rho}^{\pi}\), amounts to optimizing a standard average-reward MDP objective with a modified (i.e., `risk-aware') reward function that satisfies the requirements of reward functions for average-reward MDPs. Consequently, it follows from Theorem 8.9.3 of \citet{Puterman1994-dq} that if there exists some policy that is optimal with respect to \(\bar{\rho}^{\pi}\), then, there exists a stationary, deterministic policy that is optimal with respect to \(\bar{\rho}^{\pi}\). We note that Theorem 8.9.3 of \citet{Puterman1994-dq} assumes finite state and action-spaces, as well as bounded rewards. It can be easily shown that these assumptions are satisfied by Assumption \ref{assumption_risk_functional} and by the MDP formulation considered in this work (see Section \ref{section_average_reward}). This completes the proof.
\end{proof}

In essence, Theorem \ref{theorem_c_1} states that if an optimal policy exists for the risk-aware objective \eqref{eq_risk_rl_noncontinuous}, then there exists a stationary, deterministic policy that is also optimal with respect to this objective. Intuitively, this follows from Equation \eqref{eq_c_1_2}, which shows that the objective depends neither on time nor the history, thereby making a stationary, deterministic policy sufficient for optimality.

Finally, we can leverage Theorems 8.4.5 or 9.1.8 of \citet{Puterman1994-dq} to show that an optimal policy exists:

\vspace{4pt}

\begin{theorem}
\label{theorem_c_2}
Consider the risk-aware objective \eqref{eq_risk_rl_noncontinuous}. Suppose that, under an appropriate ergodicity-like assumption, such as Assumption \ref{assumption_unichain} or \ref{assumption_communicating}, this objective exists and is unique. Furthermore, suppose that Assumption \ref{assumption_risk_functional} holds. Then, there exists a policy that is optimal with respect to the risk-aware objective \eqref{eq_risk_rl_noncontinuous}.
\end{theorem}
\begin{proof}
We showed in the proof for Theorem \ref{theorem_c_1} how optimizing our risk-aware objective, \(\bar{\rho}^{\pi}\), ultimately amounts to optimizing a standard average-reward MDP objective with a modified reward function that satisfies the requirements of reward functions for average-reward MDPs. Consequently, it directly follows from Theorems 8.4.5 or 9.1.8 of \citet{Puterman1994-dq} (depending on which ergodicity-like assumption is used) that there exists some policy that is optimal with respect to \(\bar{\rho}^{\pi}\). We note that these theorems from \citet{Puterman1994-dq} assume finite state and action-spaces, as well as bounded rewards. It can be easily shown that these assumptions are satisfied by Assumption \ref{assumption_risk_functional} and by the MDP formulation considered in this work (see Section \ref{section_average_reward}). This completes the proof. 
\end{proof}

As such, with Theorems \ref{theorem_c_1} and \ref{theorem_c_2}, we have shown that the risk-aware objective \eqref{eq_risk_rl_noncontinuous}, which corresponds to an ergodic risk measure, provably admits a stationary, deterministic optimal policy, thereby circumventing some of the challenges and non-trivialities that have been historically associated with risk-aware RL objectives in non-continual settings. We conclude this section with a remark related to the interpretation of the risk-aware reward function, \(\hat{r}(s,a)\):

\vspace{4pt}

\begin{remark}
At first glance, the interpretation of the risk functional, \(\rho\), as a reward function, \(\hat{r}(s,a)\), in Equation \eqref{eq_c_1} may give the impression that we are `averaging away' the risk from our objective. However, this is far from reality. In particular, Equation \eqref{eq_c_1} can be thought of as computing the `time-averaged risk' associated with the limiting per-step reward distribution (e.g. the average long-run variance or the average long-run CVaR of \(R_t\)). Accordingly, the objective \eqref{eq_risk_rl_noncontinuous} precisely captures some (time-averaged) distributional characteristic associated with the limiting per-step reward distribution beyond the mean, thereby making it a risk-aware objective. 
\end{remark}

\newpage

\section{Ergodicity and Continual Learning}
\label{appendix_ergodicity}
In this appendix, we provide a mathematically rigorous discussion on the relationship between ergodicity and continual learning, in which we show that ergodicity-like assumptions, such as Assumptions \ref{assumption_unichain} and \ref{assumption_communicating}, do not preclude an agent's ability to be a continual learner. Given that this discussion is risk-agnostic (i.e., it equally applies to risk-neutral and risk-aware settings), we will limit our discussion to risk-neutral agents; however, the same arguments apply to risk-aware agents.

To make our arguments, we will leverage the language and formalism introduced in \citet{Abel2023-xe}, in which an instance of RL is considered to be an instance of continual RL (CRL) if, informally, the best agent never stops learning. To this end, we now provide key definitions from \citet{Abel2023-xe}:

\vspace{2pt}

\begin{definition}[Definition 2.1 of \citet{Abel2023-xe}]
\label{defn_interface}
An \emph{agent-environment interface} is the pair \((\mathcal{A}, \mathcal{O})\), such that \(|\mathcal{A}| \geq 2\) and \(|\mathcal{O}| \geq 1\), where \(\mathcal{O}\) and \(\mathcal{A}\) denote observation and action spaces.
\end{definition}

\vspace{2pt}

\begin{definition}[Definition 2.2 of \citet{Abel2023-xe}]
\label{defn_histories}
The \emph{histories} with respect to interface \((\mathcal{A}, \mathcal{O})\) are the set of sequences of action-observation pairs,
\begin{equation}
    \mathcal{H} = \bigcup_{t=0}^{\infty}(\mathcal{A} \times \mathcal{O})^t.
\end{equation}
\end{definition}

\vspace{2pt}

\begin{definition}[Definition 2.3 of \citet{Abel2023-xe}]
\label{defn_environment}
An \emph{environment} with respect to interface \((\mathcal{A}, \mathcal{O})\) is a function, \(e: \mathcal{H} \times \mathcal{A} \to \Delta(\mathcal{O})\), where \(\Delta(\mathcal{O})\) denotes the probability simplex over the set of observations, \(\mathcal{O}\).
\end{definition}

\vspace{2pt}

\begin{definition}[Definition 2.4 of \citet{Abel2023-xe}]
\label{defn_agent}
An \emph{agent} with respect to interface \((\mathcal{A}, \mathcal{O})\) is a function, \(\lambda: \mathcal{H} \to \Delta(\mathcal{A})\), where \(\Delta(\mathcal{A})\) denotes the probability simplex over the set of actions, \(\mathcal{A}\).
\end{definition}

\vspace{2pt}

\begin{definition}[Definition 2.5 of \citet{Abel2023-xe}]
\label{defn_realizable_histories}
The \emph{realizable histories} of a given agent-environment pair, \((\lambda, e)\), define the set of histories of any length that can occur with non-zero probability from the interaction of \(\lambda\) and \(e\), such that:
\begin{equation}
    \mathcal{H}^{\lambda,e} = \overline{\mathcal{H}} = \bigcup_{t=0}^{\infty} \left\{ h_t \in \mathcal{H}_t : \prod_{k=0}^{t-1} e(o_{k+1} \mid h_k, a_k)\lambda(a_k \mid h_k) > 0 \right\}.
\end{equation}
\end{definition}

\vspace{2pt}

\vspace{2pt}

\begin{definition}[Definition 2.8 of \citet{Abel2023-xe}]
\label{defn_performance}
Let \(\Lambda\) denote a set of agents and \(\mathcal{E}\) denote a set of environments. The \emph{performance}, \(\upsilon: \mathcal{H} \times \Lambda \times \mathcal{E} \to [v_{min}, v_{max}]\), is a bounded function for fixed constants, \(v_{min}, v_{max} \in \mathbb{R}\).
\end{definition}

In words, \(\mathcal{H}\) denotes the set of all possible finite interaction histories (with \(\overline{\mathcal{H}}\) denoting the subset of realizable histories), where a history, \(h_t \in \mathcal{H}\), corresponds to the sequence of observations and actions up to time \(t\) (this is analogous to the sequences \(O_{1:t}\) and \(A_{1:t}\), as defined in Section \ref{prelim_crl} of this work). An agent is defined as a mapping, \(\lambda: \mathcal{H} \to \Delta(\mathcal{A})\), that outputs a probability distribution over actions given a history. Following \citet{Abel2023-xe}, we let \(\Lambda\) denote the set of all such agents considered in a given RL problem. The performance function, \(\upsilon(\lambda, e)\), expresses some statistic of the future rewards produced by the interaction between an agent, \(\lambda\), and an environment, \(e\).

To formalize the learning process, \citet{Abel2023-xe} utilize the concept of an \emph{agent basis}, \(\Lambda_B \subset \Lambda\), which represents a core set of fixed, baseline behaviours (e.g. a finite set of stationary policies or fixed neural network weights) that an agent implicitly searches over as it interacts with the environment. To dictate how an agent switches between these base behaviours, \citet{Abel2023-xe} introduce the concept of a learning rule, which gives rise to the \emph{generates} operator, as follows:

\vspace{2pt}

\begin{definition}[Definition 3.2 of \citet{Abel2023-xe}]
\label{defn_learning_rule}
A \emph{learning rule} over an agent basis, \(\Lambda_B\), is a function, \(\sigma: \mathcal{H} \to \Lambda_B\), that selects a base agent for each history.
\end{definition}

\vspace{2pt}

\begin{definition}[Definition 3.4 of \citet{Abel2023-xe}]
\label{defn_generates}
We say a basis \(\Lambda_B\) \emph{generates} \(\Lambda\) in \(e\), denoted as \(\Lambda_B \models \Lambda\), if and only if there exists a set of learning rules, \(\Sigma\), such that:
\begin{equation}
    \forall_{\lambda \in \Lambda} \exists_{\sigma \in \Sigma} \forall_{h \in \overline{\mathcal{H}}} \lambda(h) = \sigma(h)(h).
\end{equation}
\end{definition}

Intuitively, a basis, \(\Lambda_B\), generates an agent set, \(\Lambda\), if every agent in \(\Lambda\) can be understood as simply
switching between the base agents (i.e., the agents in \(\Lambda_B\)) according to some learning rule.

Given this search process, an agent is said to `stop learning' if it eventually adopts one of these fixed base behaviours permanently. In particular, an agent, \(\lambda\), is said to \emph{reach} \(\Lambda_B\) in environment \(e\), denoted as \(\lambda \leadsto \Lambda_B\), where (\(\leadsto\)) denotes the \emph{reaches} operator, if there exists some realizable history after which \(\lambda\) becomes equivalent to some base agent, \(\lambda_B \in \Lambda_B\), for all future histories. By contrast, an agent that never permanently settles on a single base strategy is said to never reach the basis, such that \(\lambda \not\leadsto \Lambda_B\). Through this formalism, \citet{Abel2023-xe} propose the following definitions for a continual learning agent and continual RL:

\vspace{2pt}

\begin{definition}[Definition 4.1 of \citet{Abel2023-xe}]
\label{defn_continual_agent}
An agent, \(\lambda\), is a \emph{continual learning agent} in \(e\) relative to \(\Lambda_B\) if and only if the basis generates the agent (such that \(\Lambda_B \models \{\lambda\}\)) and the agent never reaches the basis (such that \(\lambda \not\leadsto \Lambda_B\)).
\end{definition}

\vspace{2pt}

\begin{definition}[Definition 4.2 of \citet{Abel2023-xe}]
\label{defn_continual_rl}
Consider an RL problem, \((e, \upsilon, \Lambda)\). Let \(\Lambda_B \subset \Lambda\) be a basis such that \(\Lambda_B \models \Lambda\) and let \(\Lambda^* = \arg\max_{\lambda \in \Lambda} \upsilon(\lambda, e)\). We say \((e, \upsilon, \Lambda, \Lambda_B)\) defines a \emph{continual RL} problem if:
\begin{equation}
\forall_{\lambda^* \in \Lambda^*} \lambda^* \not\leadsto \Lambda_B.    
\end{equation}
\end{definition}

Informally, Definition \ref{defn_continual_rl} states that an instance of RL is considered to be an instance of continual RL if the best agent never stops learning. We note that the above definitions are consistent with the continual RL framework considered in the main body of this work. In particular, Definition \ref{defn_continual_agent} is consistent with the non-zero plasticity requirement assumed in this work, given that an agent that reaches a static basis would exhibit zero plasticity once it reaches the basis. Moreover, Definition \ref{defn_continual_rl} can be viewed as a more mathematically rigorous, risk-neutral instance of Definition \ref{defn_risk_crl}, such that Definition \ref{defn_continual_rl} could be trivially extended to the risk-aware setting by using risk-aware observations (see Definition \ref{defn_ra_obvs}) instead of regular observations. We also note that the definition of performance (i.e., Definition \ref{defn_performance}) can be viewed as a risk-neutral instance of the definition of a risk assessment (see Definition \ref{defn_risk_assessment}) proposed in this work.

As such, having formalized the notion of continual RL and a continual learning agent, we are now ready to begin our discussion on the role of ergodicity in continual RL. In particular, we will show that ergodicity-like assumptions, such as Assumptions \ref{assumption_unichain} and \ref{assumption_communicating}, do not preclude an agent's ability to be a continual learner. To make this argument, we will seek to answer the following question:

\textit{Suppose that there exists some continual RL (CRL) problem, \((e, \upsilon, \Lambda, \Lambda_B)\), such that it satisfies Definition \ref{defn_continual_rl}. Can we determine whether the environment component of this CRL problem, \(e\), satisfies an ergodicity-like assumption, based solely on the fact that we know that the overarching problem satisfies the definition of a CRL problem?} 

In other words, we want to determine whether ergodicity-like assumptions are strictly incompatible with continual RL. That is, if we know that the problem is an instance of CRL, does it automatically invalidate the possibility that the environment is ergodic (and vice-versa)? In Theorem \ref{theorem_crl_and_ergodicity} below, we answer this question, and show that ergodicity does not imply nor preclude continual learning:

\vspace{2pt}

\begin{theorem}
\label{theorem_crl_and_ergodicity}
Let \((e, \upsilon, \Lambda, \Lambda_B)\) denote a continual RL problem, such that it satisfies Definition \ref{defn_continual_rl}. The satisfaction of Definition \ref{defn_continual_rl} neither implies nor precludes the condition that the environment, \(e\), satisfies an ergodicity-like assumption, such as Assumption \ref{assumption_unichain} or \ref{assumption_communicating}.
\end{theorem}
\begin{proof}
We observe that the environment must either satisfy an ergodicity-like assumption, or not satisfy an ergodicity-like assumption. Accordingly, we proceed with a proof by contradiction for both mutually exclusive cases:

\newpage

\textbf{Case 1:} Suppose, for the sake of contradiction, that every valid continual RL problem strictly requires a non-ergodic environment. Now consider an environment with a single state and two available actions, \(a_1\) and \(a_2\). It can be easily shown that this environment satisfies standard ergodicity-like assumptions. Now, let the reward function change periodically, such that \(a_1\) yields the optimal reward for the first \(n\) steps, \(a_2\) yields the optimal reward for the next \(m\) steps, and so forth, with the pattern repeating indefinitely, such that an optimal agent must continually adapt its policy to track the shifting rewards. Because the agent never permanently settles on a single stationary base behaviour (i.e., \(\lambda \not\leadsto \Lambda_B\)), this constitutes a valid continual RL problem within an ergodic environment, thereby contradicting the assumption that CRL strictly implies a non-ergodic environment.

\textbf{Case 2:} Suppose, for the sake of contradiction, that every valid continual RL problem strictly requires an ergodic environment. Now consider an environment consisting of two isolated, absorbing states, where switching between the two states is structurally impossible. It can be easily shown that this environment violates any standard ergodicity-like assumption. At time \(t=0\), the agent is placed into one of these states at random. Within each state, the rewards associated with each available action continually change over time. Accordingly, because the optimal action continues to shift indefinitely, the agent must continually adapt its policy and never reaches a fixed basis (such that \(\lambda \not\leadsto \Lambda_B\)). As such, this constitutes a valid continual RL problem within a non-ergodic environment, thereby contradicting the assumption that CRL strictly implies an ergodic environment.

Hence, given that both cases result in a contradiction, we can conclude that the satisfaction of Definition \ref{defn_continual_rl} neither implies nor precludes the condition that the environment, \(e\), satisfies an ergodicity-like assumption. This concludes the proof.
\end{proof}

As such, with Theorem \ref{theorem_crl_and_ergodicity}, we have shown that ergodicity and continual learning are not strictly incompatible. We note that although the counterexamples used in the proof for Theorem \ref{theorem_crl_and_ergodicity} both make use of non-stationary reward functions, there are many possible counterexamples that could be considered. For instance, one could consider an agent that has a highly constrained function approximator (such as a linear function approximator) operating in a large, stationary environment. Because the agent cannot represent the entire environment at once, it must continually update its representations as it explores different regions of the environment, thereby constituting an instance of continual RL (as per Definition \ref{defn_continual_rl}). Importantly, whether or not the environment satisfies ergodicity-like assumptions would not impact whether this example constitutes an instance of continual RL, given that the agent would be required to update its representations both in (large, stationary) ergodic and (large, stationary) non-ergodic environments.


\end{document}